\documentclass{article}

\PassOptionsToPackage{numbers, compress, sort}{natbib}


\usepackage[preprint]{neurips_2023}
\usepackage{fancyhdr}



\usepackage[utf8]{inputenc} 
\usepackage{xcolor}         
\definecolor{seablue}{RGB}{30, 144, 255}

\usepackage[T1]{fontenc}    
\usepackage{hyperref}       
\hypersetup{
    colorlinks=true,
    citecolor=seablue  
}
\usepackage{url}            
\usepackage{booktabs}       
\usepackage{amsfonts}       
\usepackage{nicefrac}       
\usepackage{microtype}      
\usepackage{subfig}      
\usepackage{appendix}      

\usepackage{algorithm}
\usepackage{algorithmic}

\usepackage{wrapfig}

\usepackage{amsmath}
\usepackage{amssymb}
\usepackage{mathtools}
\usepackage{amsthm}
\usepackage{bm}
\usepackage{bbm}

\usepackage[capitalize,noabbrev]{cleveref}
\AtBeginEnvironment{appendices}{\crefalias{section}{appendix}}

\usepackage{pdfpages}

\theoremstyle{plain}
\newtheorem{theorem}{Theorem}[section]

\newtheorem{lemma}[theorem]{Lemma}

\theoremstyle{definition}
\newtheorem{definition}[theorem]{Definition}

\theoremstyle{remark}

\newcommand{\m}[1]{\mathbf{#1}}
\newcommand{\nb}{\mathcal{B}}
\newcommand{\scm}{\mathcal{M}}
\newcommand{\SF}{\mathbb{SF}}

\usepackage[textsize=tiny]{todonotes}
\usepackage{csquotes}


\title{The Utility of ``Even if...'' Semifactual \\ Explanation to Optimise Positive Outcomes\thanks{This is a preprint, please cite the conference version at NeurIPS 2023.}
}

%

\author{%
  Eoin M.~Kenny\thanks{Contributed Equally.}\\
  Massachusetts Institute of Technology\\
  Cambridge, MA, U.S.A. \\
  \texttt{ekenny@mit.edu} \\
  \And
  Weipeng~Huang\footnotemark[2] \\
  Tencent Security Big Data Lab \\
  Shenzhen, Guangdong Province, China \\
  \texttt{fuzzyhuang@tencent.com} \\
}

\begin{document}

\maketitle

\begin{abstract}
When users receive either a positive or negative outcome from an automated system, Explainable AI (XAI) has almost exclusively focused on how to mutate negative outcomes into positive ones by crossing a decision boundary using counterfactuals (e.g., \textit{``If you earn 2k more, we will accept your loan application''}).
Here, we instead focus on \textit{positive} outcomes, and take the novel step of using XAI to optimise them (e.g., \textit{``Even if you wish to half your down-payment, we will still accept your loan application''}).
Explanations such as these that employ ``even if...'' reasoning, and do not cross a decision boundary, are known as semifactuals.
To instantiate semifactuals in this context, we introduce the concept of \textit{Gain} (i.e., how much a user stands to benefit from the explanation), and consider the first causal formalisation of semifactuals.
Tests on benchmark datasets show our algorithms are better at maximising gain compared to prior work, and that causality is important in the process.
Most importantly however, a user study supports our main hypothesis by showing people find semifactual explanations more useful than counterfactuals when they receive the positive outcome of a loan acceptance.

\end{abstract}


\section{Introduction}
\label{sec:intro}
\setcounter{footnote}{1} 
\footnotetext{Code available at: \url{https://github.com/EoinKenny/Semifactual_Recourse_Generation}}
Explainable AI (XAI) is broadly categorised into factual~\cite{bach2015pixel,kenny2023towards,ijcai2023p48} and contrastive explanations~\cite{miller2018contrastive,kenny2021generating}.
Within contrastive XAI, despite being neglected in comparison to counterfactuals, semifactuals are a major, fundamental part of human explanation, and have long been studied in psychology~\cite{boninger1994counterfactual, mccloy2002semifactual,sarasvathy2021even}, philosophy~\cite{bennett1982even,goodman1983fact,barker1991even}, and lately computer science~\cite{kenny2021generating,lu2022rationale,artelt2022even,aryal2023even,zhao2022generating,vats2022changes,artelt2023not}.
They take the form of \textit{\enquote{Even if $x$ happened, $y$ would still be the outcome}}.
Such reasoning has many potential uses as demonstrated by these prior works, but here we are focused on how semifactuals can help optimise positive outcomes for users, which (to the best of our knowledge) remains completely unexplored.

Our definition of counterfactuals is in line with Wachter et al.~\cite{wachter2017counterfactual}, where a test instance classified as $c$ must be mutated to cross a decision boundary into class $c'$.
Likewise, as established in the literature~\cite{kenny2021generating,artelt2022even}, we define a semifactual as an instance classified as $c$, which must be modified in such a way as to \textit{not} cross a decision boundary (and hence remain class $c$)~\cite{kenny2021generating}.
In recourse~\cite{ustun2019actionable,karimi2021algorithmic}, ``negative outcomes'' (e.g., a loan rejection) are generally mutated to produce ``positive outcomes'' (e.g., a loan acceptance) for users using counterfactuals. 
In our setting, we are assuming there was initially a positive outcome, and we are trying to mutate features to produce an even better situation for users, and in doing so \textit{not} cross the boundary into the negative outcome (i.e., using semifactuals).

Historically, counterfactuals have had obvious applications in computer science, such as explaining how to have a bank loan accepted rather than rejected, but applications for semifactuals as less clear. 
As such, the usage of semifactuals has often inadvertently defaulted to copying counterfactual research by also explaining negative outcomes (e.g., \textit{``Even if you double your savings, your loan will still be rejected''}~\cite{kenny2021generating,salimi2022addressing,aryal2023even}).
However, such an application for semifactual explanation perhaps has two main issues.
Firstly, it is debatable if these explanations convey useful information~\cite{artelt2022even}, whilst a counterfactual explaining how to cross a decision boundary and have a loan accepted has obvious utility~\cite{karimi2021survey}.
Secondly, such explanations make the user's situation seem helpless~\cite{mccloy2002semifactual}, in that they cannot possibly have their loan accepted, which raises ethical concerns~\cite{artelt2022even}.
However, our proposed framework can be used to not only overcome both of these issues, but actively \textit{contribute} to fairness.

Firstly, to try offer useful information for users, we flip the usual recourse problem and consider the user starting from a positive (rather than a negative) outcome.
In this setting, consider a user that has had their loan accepted, but might prefer to make a smaller down-payment on a loan application.
In this situation, our framework could present an explanation such as \textit{``Even if you half your down-payment, your loan will still be accepted''}, which seems to be more useful than explaining negative outcomes (see Section~\ref{Section:UserStudy}).
Secondly, because we are starting from a positive outcome, there is no danger of manipulating people into accepting a negative outcome, which guarantees fairness is this regard.
Now, with regards to optimising fairness even further, note that banks are not motivated to share such explanations even though they may help people, because (for example) larger down-payments are associated with lower risk on their behalf~\cite{benetton2018down}.
So, the usage of semifactuals in this application has clear potential to actively \textit{encourage} fairness and transparency.
As an aside, it is worth noting that although the focus of this paper is on financial applications, this research has broad impact on any domain for which the optimisation of a positive outcome is beneficial.
For instance, in medical applications, our framework could present explanations of the form \textit{``Even if you half your dose of drug $x$, you will still be at a low risk for disease $y$''}.
This is once again important for optimising fairness because people are frequently over-prescribed medicine with adverse side-effects~\cite{safer2019overprescribed}, but due to profit Big Pharma has no incentive to actively encourage this type of transparency.
Similar usage of semifactuals have also been proposed in smart agriculture to combat climate change~\cite{kenny2021generating}.

Our main contributions are: (1) the first explicit exploration of how to optimise positive outcomes with XAI, (2) the problem formulation for this which involved augmenting current semifactual research with the concept of \emph{Gain} (see Section~\ref{section:gain}), and (3) the premiere user test in the XAI literature for semifactuals, showing a clear application in which users find them more useful than counterfactuals.

\section{Literature Review}
When using contrastive explanation to explain loan acceptance decisions, to the best of our knowledge, this has only been explored by McGrath et al.~\cite{mc2018interpretable}.
Specifically, they suggest \textit{positive counterfactuals}, which show ``by how much'' a user had their loan accepted to help inform them when making future financial decisions.
While this is interesting information, we show that users find semifactual explanations more useful in loan acceptance situations than positive counterfactuals (see Section~\ref{Section:UserStudy}).

Semifactual explanation is growing in popularity~\cite{aryal2023even}, Kenny \& Keane~\cite{kenny2021generating} first explored the idea, but focused only on images.\footnote{Note there is other work on \textit{a-fortiori} explanations which have similar computational techniques to semifactuals~\cite{cummins2006kleor,doyle2004explanation,peters2023model}, they are justifications of the form \textit{``Because $x$ it true, $y$ must also be true''}.}
Artelt \& Hammer~\cite{artelt2022even} used diverse semifactuals to explain why an AI system refuses to make predictions due to having an unacceptably low certainty, but ignore how to explain either positive or negative outcomes.
Lu et al.~\cite{lu2022rationale} explain spurious patterns with semifactuals using a human-in-the-loop framework in NLP.
Zhao et al.~\cite{zhao2022generating} proposed a class-to-class variational encoder (C2C-VAR) with low computational cost that can generate semifactual images.
Vats et al.~\cite{vats2022changes} used generative models to produce semifactual image explanations for classifications of ulcers.
Lastly, for model exploration, Xie et al.~\cite{xie2023joint} sampled semifactual images with a joint Gaussian mixture model, and Dandl et al.~\cite{dandl2023interpretable} proposed deriving semifactual explanations from interpretable region descriptors.
In contrast to all these approaches, we are showcasing how semifactuals can be used to optimise positive outcomes for users (notably in causal settings).

From a user perspective, many have discussed the urgent need for comparative tests with semifactuals~\cite{salimi2022addressing,mertes2022alterfactual,kenny2021generating,warren2022features,mueller2021authoring}, with Aryal \& Keane~\cite{aryal2023even} pointing to the \textit{`paucity of user studies'} in the area.
However, the only such tests we are aware of are in the psychological literature over two decades ago~\cite{mccloy2002semifactual}.
Taking to this challenge, we conduct the first such test directly comparing semifactuals to counterfactuals in the XAI literature (see Section~\ref{Section:UserStudy}).

Our research is related to algorithmic recourse~\cite{ustun2019actionable} in that we are trying to ensure users are treated fairly by automated systems~\cite{karimi2021survey}.
In this area, Mothilal et al.~\cite{mothilal2020dice} explored counterfactual diversity, in that we should be offering users several explanations.
In addition, counterfactual robustness has been examined~\cite{dominguez2022adversarial}, which proposes that generated explanations should be robust to distributional shifts.
Lastly, causality has been argued as essential to providing plausible recourse~\cite{karimi2020algorithmic}.
We see these three facets as being important to our problem setting, and instantiate them in our framework.
There are other areas in recourse such as sequential decision making~\cite{naumann2021consequence,de2023synthesizing}, fairness~\cite{von2022fairness}, and privacy~\cite{pawelczyk2022privacy}, but we leave their exploration within semifactual explanation for future work.

As an aside, the literature on sufficiency could be conflated with semifactual explanation, as it describes a set of ``sufficient'' features for a prediction which, in the presence of the other features mutating, mostly doesn't affect the outcome~\cite{watson2021local,dhurandhar2018explanations,ribeiro2018anchors}.
However the techniques offer no insights for how to generate a meaningful semifactual.
More importantly though, if the sufficient features are the only actionable ones, then by definition we can't modify them to create a semifactual.





\begin{figure*}[!b]
  \centering
  \includegraphics[width=1\textwidth]{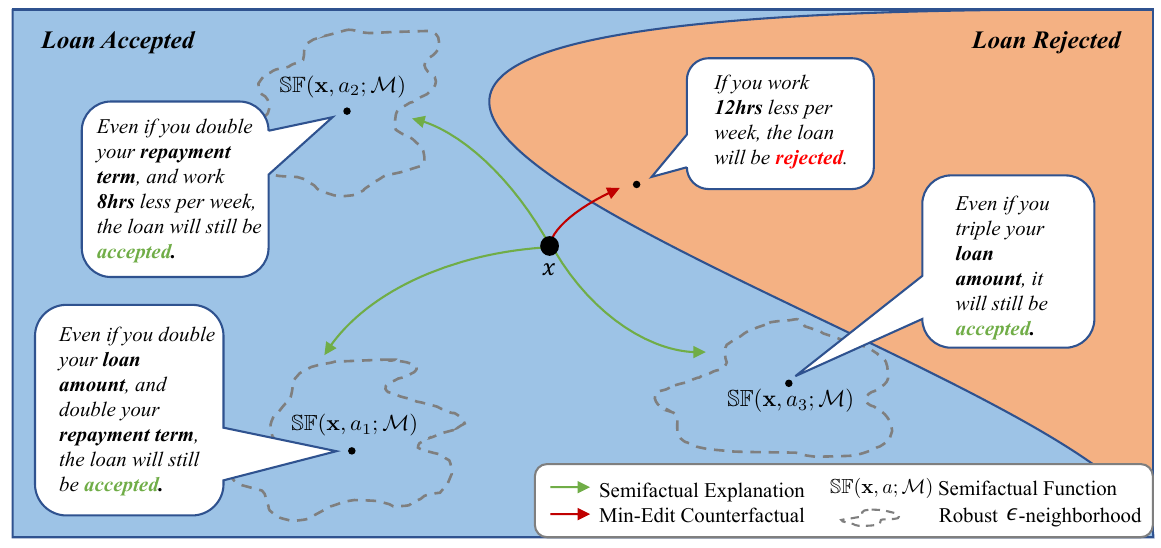}
  \caption{
  Semifactual Explanation to Optimise Positive Outcomes:
  An individual $\m{x}$ has their loan accepted, but there are several semifactual explanations which can help optimise their outcome.
  Our algorithm produces a set of semifactual explanations which \textit{maximise} the distance between $\m{x}$ and the final explanation $\mathbb{SF}(\m{x}, a; \scm)$.
  This allows the largest \textit{Gain} to be achieved so that the user gets the maximum benefit.
  In contrast, counterfactual algorithms are not suitable because they are designed to target the shortest path across a decision boundary.
  In addition, the semifactuals are robust to distributional shifts by constraining an $\epsilon$-neighborhood between them and the decision boundary.
  Note $\scm$ is the Structural Causal Model (SCM), see Section~\ref{section:framework}.
  }
  \label{fig:title}
\end{figure*}

\section{Semifactual Framework}
\label{section:framework}
In this section, we describe the basic definitions and assumptions for our semifactual framework to optimise positive outcomes for users, before formalising it under the concept of \emph{Gain} (i.e., how much a user stands to benefit from the explanation) in a causal setting, neither of which has been considered before.
As an aside, we also show how the established concepts of plausibility, robustness, and diversity can be made fit into the objective to offer better explanations.
Finally, we reflect on the theoretical properties of the framework.

\subsection{Definitions}
Let us denote an individual $\m{x} \in \mathcal{X}$ with $k$ \emph{mutable} features $\m{X} = \{X_1, \dots, X_k\}$.
Given the individual $\m{x}$, a set of actions can be applied to $\m{x}$ where each action $a(\m{x})$ is also a $k$-dim vector.
As in prior work~\cite{karimi2021algorithmic}, we apply $a(\m{x})$ and $a$ exchangeably, since the individual $\m{x}$ will always be fixed.
We explicitly exclude features that are either \emph{immutable} or \emph{non-actionable}.
Adopting Pearl's $do()$ operator~\cite{pearl2009causality}, an action can be defined as $a(\m{x})=do(\m{X} \coloneqq \bm{\theta})$, or simply $do(\bm{\theta})$, to force a hard intervention of replacing $\m{x}$ by $\bm{\theta}$ where $\bm{\theta} \in \mathcal{X}$.
It implies that, for each feature, $X_i \coloneqq \theta_i$ for the individual $\m{x}$.
If the action $do(\bm{\theta})$ imposes no change, $\m{x} = \bm{\theta}$ holds.
We further denote a set of human-constrained actionable ranges $\mathcal{A} = \{ a(\m{x}) = do(\bm{\theta}): \forall \bm{\theta} \in \mathcal{X} \}$.
Note that the actions have to be mutable and explicitly exclude any action which keeps the individual in the same position.

The non-causal semifactual interaction between $\m{x}$ and $a(\m{x})$ is defined by $\SF: \mathcal{X} \times \mathcal{A} \mapsto \mathcal{X}$.
That is, the individual $\m{x}$ taking action $a(\m{x})$ will lead to another representation  $\bm{\theta} \in \mathcal{X}$ representing that person's recourse.
Now, a structural semifactual is defined which considers the dependence between the related features~\citep{dominguez2022adversarial,karimi2021survey}.
We denote the structural causal model (SCM) by $\scm = (\m{S}, \mathbb{P}_{U})$ where $\m{S}$ are a set of structural equations and $\mathbb{P}_U$ is the distribution over the exogenous variables $U \in \mathcal{U}$.
Consider that in a causal graph, there is a set of causal parents for each feature $x_i \in \m{x}$, denoted by $\mathrm{Pa}_i$.
We denote the structural equations as $\mathbf{S} = \{x_i \coloneqq g_i(\mathrm{Pa}_i, U_i): i=1,\dots,k\}$ where $g_i(\cdot)$ is a deterministic function that describes the causal relationship for $x_i$, and depends on the exogenous variable $U_i \in \mathcal{U}$ alongside the corresponding parent set $\mathrm{Pa}_i$.
Hence, $\mathbf{S}$ induces a mapping $\mathbb{S}: \mathcal{U} \mapsto \mathcal{X}^{*}$ and its inverse mapping $\mathbb{S}^{-1}: \mathcal{X} \mapsto \mathcal{U}$.
Let $f \circ g(x) = f(g(x))$ which can be extended to more functions.
Hence, we specify the SCM-processed semifactual by $\SF(\m{x}, do(\bm{\theta}); \scm)$ to denote the transition between the states by taking a certain action through an SCM $\scm$, where
\begin{equation}
\bm{\theta}' = \SF(\m{x}, do(\bm{\theta}); \scm) \coloneqq \mathbb{S} \circ \mathbb{S}^{-1} \circ  \SF(\m{x}, do(\bm{\theta}); \scm)\, .
\end{equation}
If all features are \emph{independently manipulable}, we have $\bm{\theta}' = \bm{\theta} = \SF(\m{x}, do(\bm{\theta}); \scm) = \SF(\m{x}, do(\bm{\theta}))$.
Therefore, $\SF(\m{x}, do(\bm{\theta}); \scm)$ is a more generalised formulation which covers the non-causal case.
Lastly, we assume a binary model that generates the score for the users is $h$, where $h: \mathcal{X} \mapsto \{0, 1\}$ by which we can simply consider that $1$ means a positive outcome (e.g., a loan acceptance) and $0$ is a negative outcome (e.g., a loan rejection).
We set a lower threshold $\psi$ that separates the decision boundary.
For the form defined above, $\psi=0.5$ is a reasonable threshold that fits all situations well.







\subsection{Framework}
We define our semifactual framework as one centering on gain ($G$) that is weighted by plausibility ($P$), regularization in the form of diversity ($R$), and hard constraints in the form of robustness ($H$), indexed by $j$.
All of the components are parameterized with $\m{x}$ and a subset of suggestions $\{a_1, \dots, a_m\}$.
Letting $f(\cdot)$ be a function composed by gain and some weighting (i.e., plausibility for us), the causal semifactual framework is defined as
\begin{align}
\label{eq:sf_framework}
\max_{a_1, \dots, a_m} &  \quad \frac 1 m \sum_{i=1}^m f(G(\m{x}, a_i), P(\m{x}, a_i)) + \gamma R(\{\bm{\theta}_1, \dots, \bm{\theta}_m \}) \nonumber \\
\text{s.t.} &  \quad \bm{\theta}_i = \SF(\m{x}, a_i; {\scm}), H_j(\bm{\theta}_i) \ge 0, \forall i, j
\end{align}
where the regularisation and hard constraints can be multiple and indexed with $i$ and $j$, respectively.
One may define a similar formulation for the non-causal case (see Section~\ref{sec:nonCausalImplementation}).
We defer all details of the components until~\cref{section:sfComponents}.


\subsection{Optimising Positive Outcomes with \emph{Gain}}
\label{section:gain}
For the core of the objective we appeal to the notion of gain.
Note that \textit{gain} is similar to the idea of \emph{cost} commonly used in recourse~\cite{ustun2019actionable}, but there are three crucial differences.
First,  we are trying to \textit{maximise} gain, rather than \textit{minimise} cost~\cite{karimi2021survey}.
Second, gain ideally considers the causal dependencies between features in its function, whilst cost typically only considers the user's action(s)~\cite{karimi2021survey}.
Third, gain is further subdivided into positive and negative polarities.
To elaborate on this last point, take the example of a user who has their loan application for buying a new house accepted.
In this situation, if they desired to spend more time away from work with family, they would experience \textit{positive gain} if they could work less hours per week and still have their loan accepted (see~\cref{fig:title}).
Conversely, if this person increased the number of hours they worked, they would experience \textit{negative gain}.
Notably, positive/negative gain is not necessarily connected to the model's probabilities (see $a_1$ in~\cref{fig:title} moving away from the decision boundary).
Similar to actionability constraints which can offer individualised recourse~\cite{ustun2019actionable}, what is positive/negative gain must be manually defined for each individual.
As prior work on semifactuals simply maximised the $L_2$ distance between a test instance and explanatory one to define good explanations~\cite{kenny2021generating,artelt2022even}, we introduced the concept of gain to make them more meaningful in application.

More formally, we define the gain function by $G: \mathcal{X} \times \mathcal{A} \to \mathbb{R}$.
By denoting $\bm{\theta} = \SF(\m{x}, a; \scm)$, we decompose the function as follows:
\begin{align}
\label{eq:gain}
G(\m{x}, a)
&\coloneqq \mathcal{P}_{SF} \circ \delta(\m{x}, \bm{\theta}) = \mathcal{P}_{SF} \circ \delta(\m{x}, \SF(\m{x}, a; \scm)) \,
\end{align}
where $\mathcal{P}(\cdot, \cdot)$ is an oracle function that computes the payoff  based on the vectorised difference between $\m{x}$ and $\bm{\theta}$, i.e., $\delta: \mathcal{X} \times \mathcal{X} \mapsto \mathbb{R}^k$ which is a symmetrical difference function between the two feature representations.
The subscript of $\mathcal{P}_{SF}$ denotes a semifactual.
In interpretation, the gain function compares two states, (1) the original feature vector $\m{x}$, and (2) the SCM-processed end state $\bm{\theta}$ which was led to through $\m{x}$ taking action $a$.

\paragraph{Why is \textit{Gain} not necessarily equivalent to \textit{Cost}?}
Formally, to enable the comparison, we write the cost function (denoted by $C(\m{x}, a)$) as
\begin{align}
\label{eq:cost}
C(\m{x}, a)
&=
- \mathcal{P}_{CF} \circ \delta(\m{x}, \SF(\m{x}, a)) \,
\end{align}
which builds on the fact that cost solely considers the feature change.
Note that $\SF$ is equivalent to the notion $\mathbb{CF}$ in~\cite{karimi2021algorithmic}, what makes our approach different is the consideration of positive outcomes and gain.
Our finding is that gain in semifactuals (SFs) is not necessarily equivalent to cost in counterfactuals (CFs) where the equivalence ignores the sign of both quantities, as formally stated as follows.
\begin{theorem}
\label{thm:gaintheory}
Even if $\mathcal{P}_{SF}(\cdot, \cdot) \equiv \mathcal{P}_{CF}(\cdot, \cdot)$, gain and cost are not necessarily equivalent ignoring the sign.
\end{theorem}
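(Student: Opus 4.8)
The plan is to isolate the single place where gain and cost can diverge once the two oracle functions are identified. Under the hypothesis $\mathcal{P}_{SF} \equiv \mathcal{P}_{CF}$, denote their common value by $\mathcal{P}$; comparing magnitudes (i.e.\ discarding the leading sign in~\eqref{eq:cost}), the gain reduces to $\mathcal{P} \circ \delta(\m{x}, \SF(\m{x}, a; \scm))$ while the cost reduces to $\mathcal{P} \circ \delta(\m{x}, \SF(\m{x}, a))$. The two expressions now share the same oracle and the same difference map $\delta$, so the only surviving distinction is the argument: whether the endpoint is the SCM-processed state $\SF(\m{x}, a; \scm) = \mathbb{S} \circ \mathbb{S}^{-1} \circ \SF(\m{x}, a; \scm)$ or the bare non-causal state $\SF(\m{x}, a)$. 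Establishing the theorem therefore amounts to exhibiting one SCM for which these two endpoints, and hence the resulting scalars, fail to coincide.

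First I would invoke the observation already recorded in the framework: the two endpoints are equal precisely when all features are independently manipulable, in which case $\bm{\theta}' = \bm{\theta}$. Since the claim is existential (\emph{not necessarily} equivalent), a single counterexample with genuine causal dependence suffices, and I would build the minimal one. Take $k = 2$ with a structural equation tying the second feature to the first, e.g.\ $X_2 \coloneqq \alpha X_1 + U_2$ with $\alpha \neq 0$, and consider an action $a = do(\theta_1)$ intervening on $X_1$ alone.

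Next I would trace both endpoints explicitly. The non-causal map leaves the unintervened coordinate fixed, so $\delta(\m{x}, \SF(\m{x}, a)) = (\theta_1 - x_1,\, 0)$. The causal map instead runs abduction--action--prediction: inverting through $\mathbb{S}^{-1}$ recovers $U_2$, the intervention sets the first coordinate, and re-propagating through $\mathbb{S}$ updates the dependent coordinate, giving $\delta(\m{x}, \SF(\m{x}, a; \scm)) = (\theta_1 - x_1,\, \alpha(\theta_1 - x_1))$. These two difference vectors differ whenever $\alpha(\theta_1 - x_1) \neq 0$, and choosing $\mathcal{P}$ to be any payoff sensitive to the second coordinate (a coordinate-weighted norm, say) forces $G(\m{x}, a) \neq C(\m{x}, a)$ up to sign, which proves the statement.

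The main obstacle is conceptual rather than computational: one must make the causal propagation $\mathbb{S} \circ \mathbb{S}^{-1}$ fully explicit so that the downstream shift $\alpha(\theta_1 - x_1)$ in the dependent feature is transparent, and one must verify that the chosen oracle $\mathcal{P}$ does not accidentally collapse the two distinct difference vectors onto the same scalar (which a degenerate $\mathcal{P}$ ignoring the second coordinate would do). Handling this point cleanly is what turns the intuitive remark ``the SCM moves descendants'' into a rigorous counterexample.
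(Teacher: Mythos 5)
Your proposal is correct, and it proves the theorem by a genuinely different route than the paper. The paper argues by contradiction at full generality: it assumes $|G(\m{x},a)| = |C(\m{x},a)|$, reduces this to equality of the two difference vectors $\delta(\m{x}, \SF(\m{x},a;\scm))$ and $\delta(\m{x}, \SF(\m{x},a))$, then decomposes the coordinate sum $\sum_i |\delta(\m{x},\bm{\theta})_i| - |\delta(\m{x},\bm{\theta}')_i|$ over the indices where the two endpoints agree and disagree, showing the non-causal magnitude can only be smaller, with equality precisely when the changed features are independently manipulable --- thereby both deriving the contradiction and characterising when gain and cost \emph{do} coincide. You instead treat the statement as what it literally is, an existential claim, and construct the minimal witness: a two-variable linear SCM $X_2 \coloneqq \alpha X_1 + U_2$, a single intervention on $X_1$, and an explicit abduction--action--prediction trace yielding $(\theta_1 - x_1, 0)$ versus $(\theta_1 - x_1, \alpha(\theta_1 - x_1))$. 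Your route buys rigour at a point where the paper is actually shaky: the paper's chain of equivalences asserts $|\mathcal{P} \circ \delta_1| = |\mathcal{P} \circ \delta_2| \iff \delta_1 = \delta_2$, which is false for an arbitrary oracle (a payoff insensitive to the second coordinate collapses distinct vectors to the same scalar); you explicitly flag this degeneracy and discharge it by choosing $\mathcal{P}$ sensitive to the dependent coordinate. What the paper's approach buys in exchange is generality --- it identifies independent manipulability as the exact boundary case rather than exhibiting one SCM --- but for the theorem as stated, your counterexample is sufficient and logically cleaner.
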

\begin{proof}
Note that SCMs are also considered in counterfactual recourse~\citep{karimi2020algorithmic,karimi2021survey,dominguez2022adversarial}.
However, in this prior research SCMs are typically applied for enforcing hard plausibility constraints, not in the computation of a user's cost.
In contrast, our gain function takes the SCM-processed semifactual $\bm{\theta}'$ as an input.
We employ proof by contradiction here.
Assume that cost and gain are equivalent ignoring sign so that, without loss of generality,
\begin{align}
\label{eq:contradiction_assumption}
|G(\m{x}, a)| = |C(\m{x}, a)|
& \iff |\mathcal{P} \circ \delta(\m{x}, \SF(\m{x}, a; \scm))| = |\mathcal{P} \circ \delta(\m{x}, \SF(\m{x}, a))| \nonumber \\
& \iff \delta(\m{x}, \SF(\m{x}, a; \scm)) = \delta(\m{x}, \SF(\m{x}, a))
\end{align}
holds.
However, SCMs can result in possibly more features being changed since some features could be others' causal parents and those causal children will change their values accordingly.
By denoting $\bm{\theta} = \SF(\m{x}, a)$ and $\bm{\theta}' = \SF(\m{x}, a')$, we consider the general case as follows: 
\begin{multline}
  | \delta(\m{x}, \bm{\theta}) | - | \delta(\m{x}, \bm{\theta}') |
= \sum_i | \delta(\m{x}, \bm{\theta})_i | - \sum_i | \delta(\m{x}, \bm{\theta}')_i | \\
=\sum_{\{i: \theta_i = \theta_i' \} \cup \{i: \theta_i \ne \theta_i' \}} |\delta(\m{x}, \bm{\theta})_i| - |\delta(\m{x}, \bm{\theta}')_i|
= 0 + \sum_{\{i: \theta_i \ne \theta_i' \}} |\delta(\m{x}, \bm{\theta})_i| - |\delta(\m{x}, \bm{\theta}')_i|
\le 0 \label{eq:contradiction_result}
\,,
\end{multline}
which contradicts with~\Cref{eq:contradiction_assumption}.
Thus, even if the oracle function for calculating the payoff is the same, gain and cost are still not necessarily equivalent.
Also, the equality in~\Cref{eq:contradiction_result} holds when all features are independently manipulable or the changed features are independently manipulable of the remaining features, so that $\SF(\m{x}, a)=\SF(\m{x}, a; \scm)$.
The proof completes here.
\end{proof}



\subsection{Semifactual Components}
\label{section:sfComponents}
Here, we detail how to incorporate the concepts of plausibility, robustness, and diversity into our framework for maximising gain, because they are agreed upon as important in the literature and useful for evaluation.
While plausibility and diversity have been explored in semifactual explanation~\cite{kenny2021generating,artelt2022even}, robustness and causality (and indeed an objective balancing all together) have not, yet we argue and show that the subtleties of ``even if...'' thinking are perhaps better captured in a casual setting.



\paragraph{Plausible Gain}
We define plausibility here as explanations which are within distribution.
For example, an explanation saying a person could earn less and still have their loan accepted should change their ``debt-to-income ratio'' feature also, or it will lie outside the data manifold.
Prior work on semifactuals has only considered euclidean distance to training data as a heuristic for this~\cite{kenny2021generating}, in contrast we posit (similar to the counterfactual literature~\cite{karimi2019model}) that this is better approached with SCMs.
Hence, we define the plausibility for $\m{x}$ taking the action $a$ by
$P(\m{x}, a) = \mathrm{Pr}(a=do(\bm{\theta}) | \m{x})$
where $\m{x}$ is fixed for an individual and $\mathrm{Pr}(\cdot)$ is a density function.
In our non-causal tests, we use the $L_2$ norm to training data to approximate plausibility (i.e., being in distribution, similar to~\cite{laugel2019dangers,kenny2021generating,van2019interpretable}).
However, this issue of plausibility is naturally taken care of in our causal tests thanks to the SCM ensuring plausible feature mutations, so we don’t explicitly consider plausibility there going forward.

\paragraph{Robust Gain}
Continuing with the example of a person who has a loan accepted to buy a house, the semifactual should sometimes be robust to distribution shifts.
For example, if the person uses the semifactual explanation to triple their loan amount (recall~\cref{fig:title}), they will likely need upwards of six months to locate a new house during which the semifactual should hold if the person e.g. gets an additional credit card.
Hence, we define our semifactual robustness such that while taking action $a$, any close neighbor of the generated semifactual $\SF(\m{x}, a; \scm)$ can still receive a positive outcome.
The $\epsilon$-neighborhood of $\m{x}$ centering around an individual $\m{x}$ is
\begin{align}
\nb(\m{x}) = \{ \bm{\theta} = \SF(\m{x}, a; \scm): \forall a \in \mathcal{A}, \delta(\bm{\theta}, \m{x}) \le \epsilon \}
\end{align}
which covers all neighbors that can be reached from $\m{x}$ by taking an \emph{actionable} feature change $a$ through the SCM $\scm$.
By definition, $\m{x}$ is also a neighbor of itself since $\m{x} \in \nb(\m{x}) $ holds given $\delta(\m{x}, \m{x})=0$.
Let us represent $\nb(\SF(\m{x}, a; \scm))$ by  $\nb_{s}(\m{x}, a)$ for simplicity.
Given the predictive model $h(\cdot)$ and an individual $\m{x}$, an action $a$ is robust for individual $\mathbf{x}$ if $h(\bm{\theta}) > \psi, \forall \bm{\theta} \in \nb_{s}(\m{x}, a)$, which is equivalent to
$\min_{\bm{\theta} \in \nb_{s}(\m{x}, a)} h(\bm{\theta}) - \psi > 0$.
For instance, $\psi=0.5$ works for a binary model case.
We hence denote the term related to the robustness by
\begin{equation}
\label{eq:robust_term}
H(\m{x}, a) = \min_{\bm{\theta} \in \nb_{s}(\m{x}, a)} h(\bm{\theta}) - \psi \, ,
\end{equation}
which will be useful for constructing the final objective.

\paragraph{Diverse Gain}
It is generally preferred to offer a number of suggested actions $\{a_1, \dots, a_m\}$, rather than a single one~\cite{xin2022exploring}.
Like prior work in counterfactuals, we define diversity as the average pair-wise distance among a set of entities~\cite{zhang2008avoiding,mothilal2020dice}.
We reuse the distance function $\delta$ and define the diversity objective within a set of SFs $\{\bm{\theta}_i\}_{i=1}^m \subseteq \mathcal{X}^m$ as
\begin{equation}
\label{eq:diversity}
R(\{\bm{\theta}_i\}_{i=1}^m) =
\begin{cases}
  \frac{2}{m(m-1)} {\sum_{i=1}^m \sum_{j > i}^m L_2 \circ \delta(\bm{\theta}_i, \bm{\theta}_j)} & m > 1 \\
  0 & m = 1
\end{cases}
\end{equation}
which represents a pairwise mean distance among the set of data points, based on the $L_2$ norm.
One may accommodate $m=1$ for the case when only a single semifactual is desired.

\subsubsection{Semifactual Objective}
The final objective may be constructed as follows.

\begin{definition}[Semifactual Objective]
\label{def:sf}
We consider a simple composition multiplication function for $f(\cdot)$.
Considering gain, plausibility, robustness, and diversity, the semifactual objective function is:
\begin{equation}
\label{eq:sf_obj}
\begin{aligned}
\max &\quad \frac 1 m \sum_{i=1}^m P(\m{x}, a_i)G(\m{x}, a_i) +  \gamma R\left( \{\SF(\m{x}, a_i; \scm)\}_{i=1}^m \right)  \\
\text{s.t.} &\quad \forall i=1,\dots, m: a_i \in \mathcal{A},  H(\m{x}, a_i) > {\psi} \, .
\end{aligned}
\end{equation}
In optimisation~\citep{boyd2004convex}, an adversarial interpretation from the perspective of a two-player zero sum game can further simplify~\cref{eq:sf_obj} to
\begin{equation}
\label{eq:final_obj}
\mathcal{J} \coloneqq \min_{\lambda_1, \dots, \lambda_m \ge 0}~ \max_{a_1, \dots, a_m \in \mathcal{A}}
\frac 1 m \sum_{i=1}^m P(\m{x}, a_i)G(\m{x}, a_i) + \lambda_i H(\m{x}, a_i) + \gamma R\left( \{\SF(\m{x}, a_i; \scm)\}_{i=1}^m \right) \, ,
\end{equation}
where $H(\m{x}, a)$ is the Lagrangian.
The primal player tries to maximise the plausibility-weighted gain and diversity, with regard to $a$, whilst the dual player tries to minimise regarding a set of $\lambda$.
\end{definition}
Since there are $m$ suggestions, the constraints for robustness will be $m$ times.
Observing the objective, the robustness is a hard constraint, whilst the diversity can be regarded as regularisation.
$P$ can be seen as a scaling factor for $G$ which helps to guarantee that high expected gain is only possible alongside high plausibility, simply adding them misses this special property.

\subsection{Properties of the Framework}

\paragraph{Effective Solution Space.}
We discuss the set of meaningful solutions here and the result validates the re-formulation [i.e., \cref{eq:final_obj}] of the semifactual framework [i.e., \cref{eq:sf_obj}].
First, we depict the lemma.
\begin{lemma}
\label{th:lower}
Assume that the limit of the gain function and diversity term are finite.
Also, assume that $\mathcal{A}^+ \coloneqq \{ a \in \mathcal{A}: G(\m{x}, a) \ge 0 \}$ is non-empty for an individual $\m{x}$.
The semifactual objective $\mathcal{J} \ge 0$ when $\forall i = 1, \dots, m, a_i \in \mathcal{A}^+: H(\m{x}, a_i) \ge 0$, otherwise $\mathcal{J}=-\infty$.
\end{lemma}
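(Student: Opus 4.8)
The plan is to treat $\mathcal{J}$ as the value of the two–player game in \cref{eq:final_obj}, writing $\Phi(a,\lambda)=\frac1m\sum_{i=1}^m\big[P(\m{x},a_i)G(\m{x},a_i)+\lambda_i H(\m{x},a_i)\big]+\gamma R(\{\SF(\m{x},a_i;\scm)\}_i)$ so that $\mathcal{J}=\min_{\lambda\ge 0}\max_a\Phi(a,\lambda)$. I would first record the sign and boundedness facts that drive everything: $P\ge 0$ (a density), $R\ge 0$ with $\gamma\ge 0$ (an averaged $L_2$ distance with a nonnegative weight), $G\ge 0$ exactly on $\mathcal{A}^+$, and, by the finiteness hypothesis together with boundedness of the plausibility weight, $P\,G$ and $R$ are bounded above by a constant $B$. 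Throughout I restrict the inner maximisation to the meaningful actions $\mathcal{A}^+$, which is precisely the ``effective solution space'' the lemma concerns; this restriction is what turns the ``otherwise'' branch into an exact $-\infty$ rather than a finite negative number, since an unrestricted maximiser could otherwise escape to a robust but negative-gain action.

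For the first branch, suppose that for every $i$ there is a robust positive-gain action, i.e.\ some $a_i^\star\in\mathcal{A}^+$ with $H(\m{x},a_i^\star)\ge 0$. Evaluating $\Phi$ at $a^\star=(a_1^\star,\dots,a_m^\star)$, every summand is nonnegative: $P\,G\ge 0$ because $P\ge 0$ and $a_i^\star\in\mathcal{A}^+$; $\lambda_i H\ge 0$ because $\lambda_i\ge 0$ and $H(\m{x},a_i^\star)\ge 0$; and $\gamma R\ge 0$. Hence $\Phi(a^\star,\lambda)\ge 0$ for every $\lambda\ge 0$, so $\max_a\Phi(a,\lambda)\ge 0$ for every $\lambda$, and minimising over $\lambda\ge 0$ gives $\mathcal{J}\ge 0$. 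This direction needs only the existence of one such $a^\star$ per coordinate and is insensitive to whether the max ranges over $\mathcal{A}$ or $\mathcal{A}^+$.

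For the second branch, suppose that for some coordinate $i_0$ no positive-gain action is robust, so that $\eta:=-\max_{a\in\mathcal{A}^+}H(\m{x},a)>0$ (I take $\mathcal{A}$ compact and $H$ continuous so the max is attained; otherwise one reads the condition directly as $\sup_{\mathcal{A}^+}H<0$). I would let the dual player send $\lambda_{i_0}\to\infty$ while holding the other multipliers at $0$, and bound $\Phi$ uniformly in $a$: the $i_0$-term is at most $\tfrac1m(B-\lambda_{i_0}\eta)$, while the remaining gain terms and $\gamma R$ are together at most a fixed constant by the boundedness in the first step. Thus $\max_a\Phi(a,\lambda)\le \text{const}-\tfrac{\eta}{m}\lambda_{i_0}\to-\infty$, whence $\mathcal{J}=\min_{\lambda\ge 0}\max_a\Phi=-\infty$.

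The main obstacle I anticipate is the diversity term $R$, which couples the $m$ coordinates and so prevents the objective from splitting into $m$ independent Lagrangian duals; I handle it purely through its sign ($R\ge 0$ in the first branch) and its boundedness ($R\le B$ in the second), so the uniform-in-$a$ upper bound still diverges to $-\infty$ despite the coupling. The only other delicate point is the attainment of $\max_{\mathcal{A}^+}H$: to obtain a clean $-\infty$ rather than a merely finite-but-unattained value, one needs the strict negativity $\max_{\mathcal{A}^+}H<0$, which I would justify by compactness and continuity, or simply fold into the reading of ``no robust positive-gain action exists''.
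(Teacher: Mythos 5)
Your proof is correct (under the reading of the lemma that both you and the paper effectively adopt: the ``when'' clause means a robust action exists in $\mathcal{A}^+$ for each coordinate, and ``otherwise'' means none does), but it takes a genuinely different route from the paper's. The paper proves \cref{th:lower} by pushing the minimisation inside the maximisation, i.e.\ it rewrites $\mathcal{J}$ as $\max_{a}\bigl\{\tfrac1m\sum_i P(\m{x},a_i)G(\m{x},a_i) + \tfrac1m\sum_i \min_{\lambda_i\ge0}\lambda_i H(\m{x},a_i) + \gamma R\bigr\}$, computes the inner minimum in closed form ($-\infty$ if $H(\m{x},a_i)<0$, and $0$ otherwise, attained at $\lambda_i=0$), and then notes that the outer maximisation restricted to $\mathcal{A}^+$ is nonnegative exactly when a feasible choice exists. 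You instead keep the order of play as written in \cref{eq:final_obj}: for the $\mathcal{J}\ge0$ branch you exhibit a single feasible $a^\star$ making your $\Phi(a^\star,\lambda)\ge0$ uniformly in $\lambda$, and for the $-\infty$ branch you send $\lambda_{i_0}\to\infty$ and dominate $\max_a\Phi(a,\lambda)$ by $\mathrm{const}-\tfrac{\eta}{m}\lambda_{i_0}$. The trade-off is instructive: the paper's swap of $\min$ and $\max$ is, strictly speaking, only a weak-duality move ($\max\min\le\min\max$), so its computation evaluates the max--min value, and the $-\infty$ conclusion for the min--max objective as actually defined does not follow from it without further argument; your proof is faithful to the min--max definition and closes that gap, but it must pay with two ingredients the paper never needs: (i) boundedness of $PG$ and $R$ (correctly extracted from the finiteness hypothesis) to obtain a bound uniform in $a$ despite the coupling through the diversity term, and (ii) the strict gap $\sup_{a\in\mathcal{A}^+}H(\m{x},a)<0$ (via compactness and continuity, or by folding it into the meaning of ``no robust action''), since under min--max a sequence of actions with $H\to0^-$ would otherwise keep $\mathcal{J}\ge0$. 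Your further observation that the inner maximisation must range over $\mathcal{A}^+$---otherwise the primal player escapes to a robust negative-gain action and the ``otherwise'' value is finite rather than $-\infty$---is also correct, and is only implicit in the paper, whose final step silently restricts the maximisation to $\mathcal{A}^+$.
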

See~\cref{sec:proof-lower} for the proof.
We can summarise that the action set which is able to provide the positive payoff can be defined by the named effective solution space for $\m{x}$:
$\mathcal{A} = \{ a \in \mathcal{A}: H(\m{x}, a) \ge 0, G(\m{x}, a) > 0 \}$.
Hence, repeated suggestions will be produced when the number of actions in this space is smaller than the required $m$.
Otherwise, the solution will provide more versatile options.
There are no suggestions to achieve an effective semifactual(s) (i.e., with positive gain) if this solution space is empty.
However, similar situations exist for counterfactuals when they are also impossible to generate, assuming a similar set of actionable constraints are defined.



\section{Implementation Details}
We now introduce our methods to solve~\cref{eq:final_obj}, henceforth called Semifactual-recourse GENeration (S-GEN), for both causal and non-causal domains.
In the following paragraphs, we use $\hat{G}$ to denote an empirical approximation of $G$, and likewise for $P$, $H$, $R$, and $\mathcal{J}$.

\subsection{Causal Case}
Assuming the presence of a differentiable classifier $h(\cdot)$ and SCM $\scm$, (recall the latter guarantees plausibility), let $\Omega_i(\m{x}) = \{\mathrm{Pr}(\bm{\theta}): \bm{\theta} \in \mathcal{B}_s(\m{x}, a_i) \}$ be the probability distribution over the $\epsilon$-neighborhood of $\SF(\m{x}, a_i; \scm)$.
Also, let $\mathbf{B}_i$ represent a finite subset of $\mathcal{B}_s(\m{x}, a_i)$ sampled according to $\Omega_i(\m{x})$.
Our objective is:
\begin{align}
\label{eq:causalCase}
\max_{ a_1, \dots, a_m}
\min_{ \lambda_1, \dots, \lambda_m}
&\quad \frac 1 m \sum_{i=1}^m  - \lambda_i \mathcal{L} \left( h(\SF(\m{x}, a_i; \scm), h(\m{x}) \right) - \frac {1} {|\mathbf{B}_i|}\sum_{\bm{\theta}_i \in \mathbf{B}_i} \lambda_i \mathcal{L} \left( h(\bm{\theta}_i), h(\m{x}) \right) \nonumber \\
&\quad + \hat{P}(\m{x}, a_i) \hat{G}(\m{x}, a_i)   + \gamma \hat{R}( \{\SF(\m{x}, a_i; \scm)\}_{i=1}^m) \nonumber \\
\text{s.t.}
&\quad
\forall i = 1, \dots, m: a_i \in \mathcal{A}, \lambda_i > 0
\end{align}
where $\mathcal{L}$ is the binary cross entropy loss.
For robustness, we used Monte Carlo (MC) sampling with an epsilon $\epsilon$ robust hypersphere, and if either the instance or sampling return a \emph{negative outcome} with $h(\cdot)$, we use the prior optimisation step as the solution.
For diversity, $m$ is set to the number of actionable feature sets, and a solution is obtained for each.
We utilise the causal recourse approach of Karimi et al.~\cite{karimi2020algorithmic} for solving the maximin.
The actionable bounds are clipped each iteration, and $\lambda$ is iteratively decreased to put more emphasis on gain over time (see Algorithm~~\ref{alg:sg}).

\subsection{Non-Causal Case}
\label{sec:nonCausalImplementation}
For the non-causal case, we use a genetic algorithm~\citep{virgolin2022robustness,whitley1994genetic} which only assumes a binary predictive model $h(\cdot)$.
This approach follows the standard design for genetic algorithms, with some minor alterations specifically for semifactual generation, see~\Cref{app:algorithm} for the pseudocode.
Next we present the fitness function which optimises our objective.

\subsubsection{Fitness Function}
\label{sec:fitfunction}
For gain, the average distance between an individual $\m{x}$ and each semifactual $\SF(\m{x}, a)$ is measured as $ \hat{G}(\m{x}, a) = \|\SF(\m{x}, a) - \m{x}\|_2$.
For robustness, we relax it to two constraints: $H_p$ is the probabilistic robustness for the neighbor points where the generated semifactuals for a query are randomly perturbed using MC simulation to make sure the surrounding neighborhood is robust, and $H_a$ the absolute robustness for the individual $\m{x}$ (more detail in~\cref{sec:prob_relaxation}).
For the first constraint, a score of $\hat{H}_{p}(\m{x}, a) = \frac{1}{n}\sum_{i=1}^{n}\mathbbm{1}\{ h(\m{x}) = h(\bm{\theta}_i) \}$ where $\bm{\theta} \sim \mathrm{Pr}(\nb_s(\m{x}, a))$,  is returned.
For the second constraint, a score of $\hat{H}_{a}(\m{x}, a) = \mathbbm{1}\{ h(\m{x}) = h \circ \SF(\m{x}, a)\}$ is returned.
Hence, the solution is rewarded for (1) the neighborhood samples, and (2) the semifactuals themselves being classified as $h(\m{x})$.
For plausibility, we take from prior work and directly use the training data~\cite{kenny2021generating}.
Specifically, considering the training data set $\mathcal{D}$, we define the notion of plausibility using the distance of each semifactual generated to the nearest training data point.
As the term must be maximised, we use a function which is monotonically decreasing with respect to the distance with $P(\m{x}, a) \approx \hat{P}_{\mathcal{D}}(\m{x}, a) = \exp\{ 1 / ( \min_{\bm{\theta} \in \mathcal{D}} \| \SF(\m{x}, a) - \bm{\theta} \|_2^2 + \gamma_p ) \}$
where  $\gamma_p$ is to account for when a perfect match to the semifactual exists in the training data (thus the division is undefined), and $\hat{P}_{\mathcal{D}}(\m{x}, a)$ is an empirical approximation (based on $\mathcal{D}$) of the plausibility.
Lastly, for diversity~\cite{zhang2008avoiding,mothilal2020dice}, we take the mean distance between all $m$ generated semifactuals with $\hat{R}( \{\SF(\m{x}, a_i)\}_{i=1}^m)$ which precisely follows~\cref{eq:diversity}.
This objective collapses to 0 when $m=1$.

Certain objectives need to be weighted individually based on the problem.
For example, explanations which can be acted upon immediately perhaps don't need robustness.
Notably, the multiplier $\lambda$ in~\cref{eq:final_obj} is split to $\lambda_p$ and $\lambda_a$, for $H_p$ and $H_a$ respectively.
In this work, we treat them as hyperparameters.
Also, they are used alongside $ \gamma$ to balance the objectives.
Since $\lambda$ and $ \gamma$ are selected as hyperparameters, they are removed under the min operator.
Finally, the objective (fitness) function is defined as:
\begin{align*}
\max_{ a_1, \dots, a_m \in \mathcal{A}^+}
\frac 1 m \sum_{i=1}^m \hat{P}_{\mathcal{D}}(\m{x}, a_i) \hat{G}(\m{x}, a_i) + \lambda_{p} \hat{H}_{p}(\m{x}, a_i)
+ \lambda_{s} \hat{H}_{a}(\m{x}, a_i) + \gamma \hat{R}( \{\SF(\m{x}, a_i; \scm)\}_{i=1}^m) \,.
\end{align*}
%
We selected the hyperparameters via a grid search, see~\cref{sec:hyperApp}.
Crucially, we also weight the fitness function output by $\hat{H}_{p}(\m{x}, a)$ to encourage solutions with more semifactuals (see Algorithm~~\ref{alg:noncausal}).

\section{Experiments \& Results}
\label{sec:expt}
Here we test S-GEN in both causal and non-causal settings.
We show the effectiveness of our method in optimising a user's positive outcome compared to baselines and open source our code (see~\Cref{sec:code}).
The actionability constraints are detailed in~\Cref{sec:actConsts}.
Baselines were modified to be appropriately compared, most importantly, we stopped counterfactual techniques before they crossed a decision boundary (thus generating semifactuals), and modified semifactual techniques to work on tabular data,~\Cref{sec:baselinesAppendix} details the peripheral modifications.

In the non-causal setting, we consider three datasets, Loan Application~\cite{lendingDataset}, German Credit~\cite{groemping2019south}, and BCSC~\cite{cancerData}.
All categorical variables are one hot encoded.
Three models were used, a decision tree, logistic regression, and na\"ive bayes, each with 30 random test data point explanation samples gotten by varying the random seed.
Note that because the range varied on each dataset, the results were normalised and averaged for each, but~\Cref{sec:full_figs} details each individual dataset for completeness.
For baselines, we modify three techniques, DiCE by Mothilal et al.~\cite{mothilal2020dice} (henceforth DiCE*), PIECE by Kenny \& Keane~\cite{kenny2021generating} (henceforth PIECE*), and Diverse semifactual Explanations of Reject by Artelt \& Hammer~\cite{artelt2022even} (henceforth DSER*).
Plausibility is measured as the distance between a generated semifactual(s) and the nearest training example; thus, the smaller the better.
Robustness is measured by MC sampling $n=100$ single feature perturbations of each semifactual $\bm{\theta}_i$, predicting their class, and returning a float between 0-1 of the success rate as described in Section~\ref{sec:fitfunction}.

In the causal setting, the Adult~\cite{KohaviAdult} and COMPAS~\cite{barenstein2019propublica} datasets are considered.
The SCMs from Nabi \& Shpitser~\cite{nabi2018fair} were used, and the structural equations from Dominguez et al.~\cite{dominguez2022adversarial}.
All categorical features are treated as real-valued.
We use the pre-trained MLP classifiers from Dominguez et al.~\cite{dominguez2022adversarial} and take 30 averaged samples from 5 random seeds.
As baselines we modify the technique of Karimi et al.~\cite{karimi2020algorithmic} [henceforth Karimi et al.(2021)*], and Dominguez et al.~\cite{dominguez2022adversarial} [henceforth Dominguez et al.(2022)*], the latter optimises with robustness in mind.
We optimise the relevant techniques to be robust in an $\epsilon=0.1$ hypersphere, and the C\&W adversarial attack by Carlini \& Wagner~\cite{carlini2017towards} measures robustness by checking if the nearest adversarial attack is outside this radius.

For all tests, the main metric of concern is gain, that is, the mean distance between a query and its generated semifactual(s), the larger this number, the better.
Diversity is also measured for all tests as the mean distance between all $m$ generated semifactuals for an individual $\m{x}$, the higher the number, the better.
To be in line with prior art, the $L_2$ norm is used in non-causal tests~\cite{artelt2022even}, and the $L_1$ in causal~\cite{dominguez2022adversarial}.
Note for causal tests the SCM guarantees plausibility so this metric is not reported.

\subsection{Non-Causal Results}

\begin{figure*}[!b]
  \begin{center}
  \includegraphics[width=1\textwidth]{./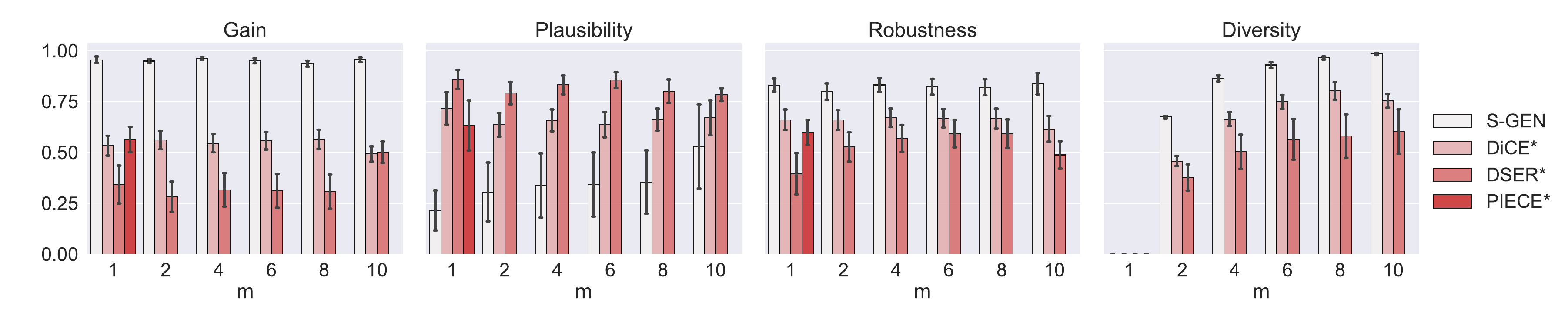}
  \caption{
  Results:
  The ability of S-GEN to create semifactuals is compared to DiCE*, DSER*, and PIECE*.
  Overall, S-GEN does the best, achieving significantly better results to all baselines in all tests.
  Note we normalised all results before averaging because each dataset has different scaling.
  Standard error bars are shown.
  }
  \label{fig:nonCausalExpt}
  \end{center}
\end{figure*}

Our purpose here is to show that current methods are insufficient to meet the basic requirements for semifactual explanation discussed in Section~\ref{section:sfComponents}.
Specifically, a technique needs to optimise gain, while remaining plausible, robust, and offering diverse explanations.

Observing the average normalised results across all datasets (note robustness was not normalised since it is already 0-1 range), Figure~\ref{fig:nonCausalExpt} shows that S-GEN performed the best on all metrics for all values of $m$ (1-10).
The results demonstrate that traditional counterfactual approaches (DiCE*) are not suitable to achieve optimal gain, due to them focusing on minimising cost.
Moreover, methods built for semifactual generation specifically (i.e., DSER* \& PIECE*) that \textit{do} actually maximise gain somewhat, still fail to match the results of S-GEN.
This shows that S-GEN is superior to existing semifactual methods (and popular counterfactual approaches appropriately modified) for maximising a user's gain in positive outcomes.
Moreover, it does so while maintaining superior plausibility, robustness, and diversity in all tests.

\subsection{Causal Results}


\begin{figure}[!t]
  \begin{center}
  \subfloat[Causal Experiment: Adult]{
    \includegraphics[width=0.31\textwidth]{./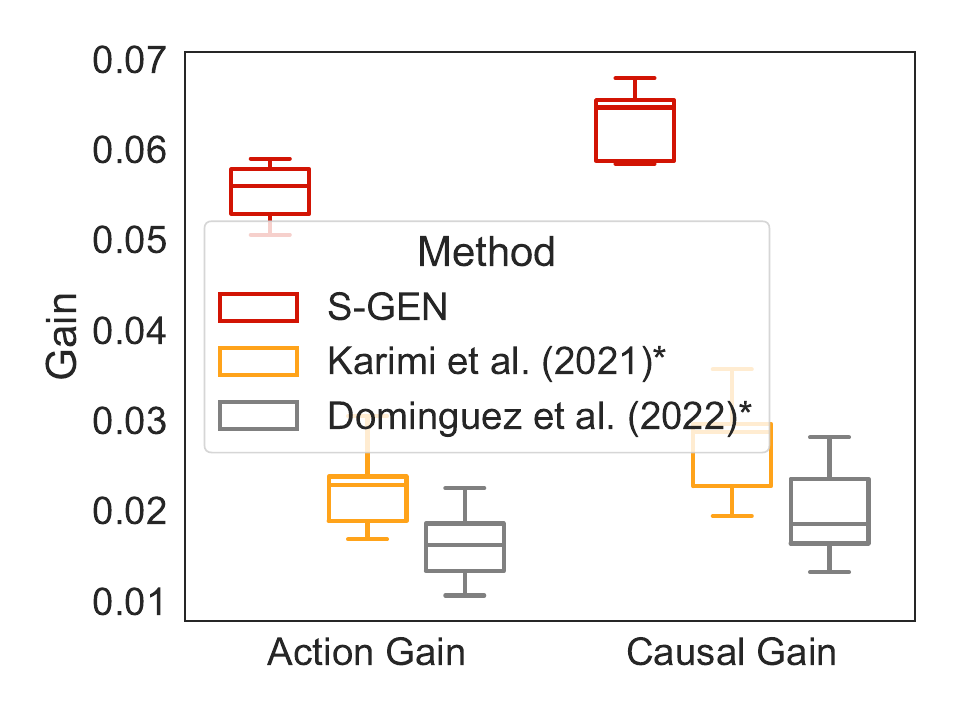} \label{fig:adult}
  }~~
  \subfloat[Causal Experiment: COMPAS]{
    \includegraphics[width=0.31\textwidth]{./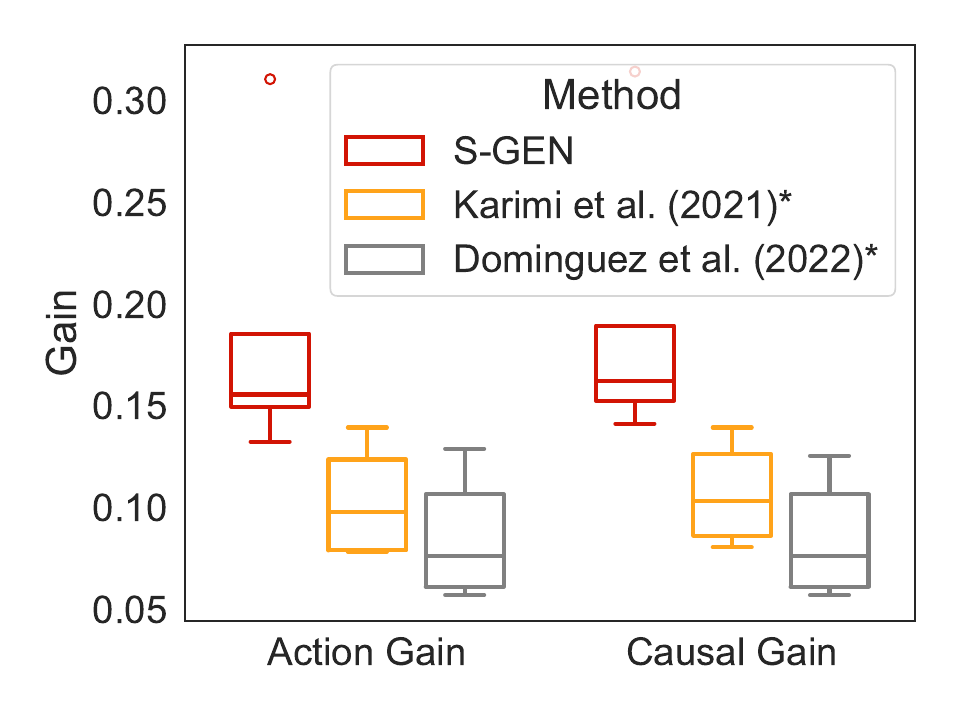} \label{fig:compas}
  }~~
  \subfloat[User Study]{
    \includegraphics[width=0.31\textwidth]{./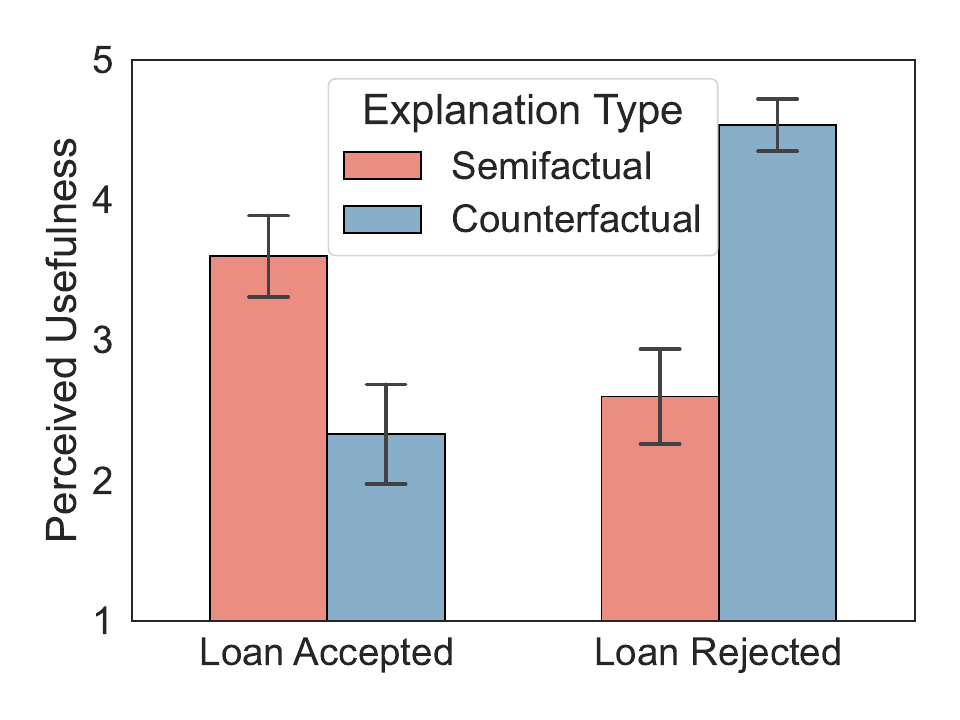} \label{fig:user_study}
  }
  \caption{
  Causal Experiment \& User Study Results:
  (a/b) show the gain achieved by all methods both before and after considering the causal dependencies.
  Firstly, note that S-GEN achieves significantly more gain than the alternatively proposed approaches.
  Most importantly however, (a) shows there is significantly more gain achieved on the Adult data by S-GEN after taking causal dependencies into account, showing the importance of a causal formalisation.
  (c) Shows the user study results, where people perceive semifactual explanation as being significantly more useful than counterfactuals in the positive outcome of having a loan accepted.
  Standard error bars are shown.
  }
  \label{fig:expt2_3}
  \end{center}
\end{figure}

We evaluate our algorithm in a causal setting where the SCMs and structural equations are known.
The primary purpose of this test is to demonstrate that the semantics of semifactual ``even if'' thinking is better captured in a causal setting due to dependencies being taken into account when calculating a person's gain.
With regard to diversity, we fix $m$ to the maximum number of feature sets available from the actionable features (so only one $m$ value is tested).

\Cref{fig:adult,fig:compas} show the initial gain achieved by a person after taking a certain action (i.e., the \textit{Action Gain}), and how this gain transforms after considering the causal relationship between features (i.e., the \textit{Causal Gain}).
Firstly, the total gain achieved by S-GEN is much larger than the baselines in both datasets and hence consistent with our non-causal tests.
More importantly however, the change in gain a person achieves after considering the causal relations in the adult dataset is significantly higher both in significance testing and effect size ($0.055 \pm 0.001$ v. $0.063 \pm 0.001$; t-test $p<0.02$; Cohen's $d=2.24$), showing it is beneficial to consider causality when calculating a person's gain.
The results of diversity put S-GEN first also ($\mbox{S-GEN} = 0.84 \pm 0.09$ v. $\mbox{Karimi}=0.43 \pm 0.03$ v. $\mbox{Domineguez}=0.34 \pm 0.03$).
In robustness, both S-GEN and Domineguz et al. (2022)* did reasonably well ($\mbox{S-GEN}=87\%$ success v. $\mbox{Domineguz}=54\% $ success), but Karimi et al. (2021)* did not ($7.2\%$ success), likely due to the latter not being designed for this.




\section{User Evaluation}
\label{Section:UserStudy}
The primary motivation behind this work is the hypothesis that semifactual explanation would be preferred by users over counterfactuals in positive outcome settings.
To test this assumption, we design the first user test in XAI directly comparing the two.
Specifically, we show users three materials in which a person has a bank loan accepted, and three in which they don't.
Users were then shown both explanation types for each material, and asked to rate on a scale from 1-5 how useful each were. 
So, the study was a within-subjects design, and the condition was the explanation type.
Note that although we are studying the effect of the explanation type on loan acceptance, the loan rejection scenarios were also included to balance people's view of the problem setting, and as attention checks to verify that users were engaging with the materials and varying their scores accordingly.
For analysis, each user's scores for counterfactuals and semifactuals were averaged in both loan acceptance and rejection materials into four decimal scores per user, thus allowing us to analyse the discrete Likert scores with t-tests~\cite{kenny2021explaining}.
As is a popular approach~\cite{hoffman2018metrics}, we don't explicitly define what ``useful'' means to users, but rather let them use their own natural interpretation, as the results returned were reasonably consistent across individuals, they appear to have converged on an common interpretation of this word.
The null hypothesis is that people will find both explanation types not significantly different in loan acceptance.
The alternative is that people will find semifactuals significantly more  useful in loan acceptance.

A power analysis~\cite{cohen1992statistical} of two dependent means with an effect size $dz=0.8$, alpha $\alpha=0.05$, and power ($1-\beta$ err prob)=0.9 informed a sample of 15 was appropriate for t-tests.
Users were gathered from Prolific.com, 8 males, 7 females, aged 18+, native English speakers, and from the U.S.
People were paid \$12/hr, which totalled \$35.
The semifactuals were generated with S-GEN, and the counterfactuals with DiCE~\cite{mothilal2020dice}, notably these are equivalent to \textit{positive counterfactuals} by McGrath et al.~\cite{mc2018interpretable} for explaining loan acceptance situations.
The study obtained IRB approval from MIT.

All users engaged and changed their ratings significantly depending on whether a loan was accepted or rejected, so none were excluded.
\cref{fig:user_study} shows users find semifactuals significantly more useful in loan acceptance (S-GEN=3.60$\pm$0.27 v. DiCE=2.33$\pm$0.34; $p$ $<$ .005) compared to rejections when counterfactuals are preferred (S-GEN=2.6$\pm$0.32 v. DiCE=4.53$\pm$0.17; $p$ $<$ .0001).
Hence we reject the null and lend credible evidence that semifactuals are more useful to explain positive outcomes.


\section{Discussion}
\label{Section:discussion}
Although much XAI work has explored how to explain positive outcomes, to the best of our knowledge, no consideration has been given towards explaining how to \textit{optimise} them.
Here, we have taken the novel step of exploring this, and showed how semifactuals are especially suited for the purpose.
This required building on prior work in semifactuals by (1) introducing the concept of \emph{Gain}, (2) re-framing them in a causal setting, and (3) conducting their first user test in XAI.
Perhaps the notable limitation of our work is that although we have shown people do perceive semifactuals as being more useful in positive outcomes, we have not demonstrated this quantitatively, notably because of the difficulties acquiring an appropriate user base alongside the ethical considerations of such a study.
Moreover, considering a casual formulation of semifactuals requires an SCM, which is not always realistic, but we have provided a non-causal algorithm for these situations.
In future work, it would be interesting to formalise the utility of semifactuals for optimising positive outcomes in other domains such as robotics, which likely requires other considerations.

\section*{Acknowledgements}
The authors would like to thank Neil J. Hurley, alongside Mark T. Keane and Ruth M.J. Byrne who both inspired early consideration of the ideas in this paper. 
The authors would also like to thank MIT for their support in this project. 
This research wasn't directly supported by any grants or funding. 
We hope readers find the ideas interesting and useful for application. 




\bibliographystyle{abbrv}
\bibliography{example_paper}

\begin{appendices}

\newpage

\section{Property Analysis}
\label{appendix:proofs}

\subsection{Proof of~\cref{th:lower}}
\label{sec:proof-lower}
\begin{proof}
We can rewrite $\mathcal{J}$ to
\begin{align}
\label{eq:final_obj2}
\mathcal{J} &= \max_{a_1, \dots, a_m \in \mathcal{A}} \left \{ \frac 1 m \sum_{i=1}^m  P(\m{x}, a_i) G(\m{x}, a_i) + \frac 1 m \sum_{i=1}^m \min_{\lambda_{i} \ge 0} \lambda_i H(\m{x}, a_i) + \gamma R\left( \{\SF(\m{x}, a_i; \scm)\}_{i=1}^m \right) \right \} \, .
\end{align}
We derive the fact that, for any $i$,
\begin{align*}
\min_{\lambda_i \ge 0} \lambda_i R(\m{x}, a_i)
&=
\begin{cases}
-\infty & H(\m{x}, a_i) < 0 \\
0 & \mbox{otherwise}
\end{cases}
\end{align*}
where $-\infty$ comes from setting $\lambda_{i} = \infty$ and $0$ is obtained by setting $\lambda_{p_i}=0$.
By the linearity of summation, we can further derive
\begin{align*}
\frac 1 m \sum_{i=1}^m \min_{\lambda_i \ge 0} \lambda_i R(\m{x}, a_i)
&=
\begin{cases}
-\infty & \exists i, H(\m{x}, a_i) < 0 \\
0 & \mbox{otherwise}
\end{cases} \, .
\end{align*}
That is, if any constraint for the robustness is unsatisfied, the dual player will minimise the objective towards $-\infty$; however, the primal player cannot optimise towards $\infty$ given that the limit of the gain function and the diversity are finite.
In other words, if the constraints are satisfied, the primal player can freely optimise the objective.
Once $H(\m{x}, a_i) \ge 0, \forall a_i$ are satisfied, the objective becomes
\begin{equation}
\begin{aligned}
\label{eq:final_obj2}
\tilde{\mathcal{J}}
&\coloneqq \max_{a_1, \dots, a_m \in \mathcal{A}^+} \frac 1 m \sum_{i=1}^m P(\m{x}, a_i) G(\m{x}, a_i) + \gamma R\left( \{\SF(\m{x}, a_i; \scm)\}_{i=1}^m \right) \\
&\ge \min_{a_1, \dots, a_m \in \mathcal{A}^+} \frac 1 m \sum_{i=1}^m P(\m{x}, a_i) G(\m{x}, a_i) + \gamma R\left( \{\SF(\m{x}, a_i; \scm)\}_{i=1}^m \right) \\
&> 0
\end{aligned}
\end{equation}
as $P(\m{x}, a_i) > 0$ and $G(\m{x}, a_i) \ge 0$ for any $a_i \in \mathcal{A}^+$; also, $R \ge 0$ holds.
We conclude the proof here.
\end{proof}

\subsection{A Probabilistic Relaxation of Robustness}
\label{sec:prob_relaxation}
Absolute robustness is difficult to guarantee, and common practice is to relax this via a probabilistic approach~\citep{karimi2020algorithmic}.

Assume there is a distribution over the sample space $\nb_{s}(\m{x}, a)$ denoted by $\mathrm{Pr}(\nb_s(\m{x}, a))$.
We write $\bm{\theta} \sim \mathrm{Pr}(\nb_s(\m{x}, a))$ to indicate that $\bm{\theta}$ is sampled from the set $\nb_{s}(\m{x}, a)$ under the density $\mathrm{Pr}(\cdot)$.
Let $\mathbb{E} [h(\bm{\theta}) | \m{x}, a]$ denote the expectation of $\bm{\theta}$ in this configuration.
Hence, we modify~\cref{eq:robust_term} to
\begin{align}
\label{eq:exp-robust}
\mathbb{E} [h(\bm{\theta}) | \m{x}, a] &> \tilde{\psi},
\end{align}
where $\tilde{\psi}$ is a function that adjusts the base score threshold $\psi$.
It is crucial to have this threshold function in order to consider the variance of scores in the neighbor set.
Particularly, we would like most neighbors to remain in a similarly ``good" state, with low variance between them.

Moreover, we explicitly impose $ h(\SF(\m{x}, a; \scm)) - \psi > 0 $.
It places a hard constraint to avoid the case in which the neighbors of the semifactual are robust, but the ``semifactual" itself has crossed the decision boundary to become a counterfactual.
Whilst somewhat unlikely, this situation is theoretically possible, and requires consideration.
In this case, $H$ is re-written as $H_{p}$, which represents a combination of (i) the probabilistic robustness, and (ii) the absolute robustness for the semifactual $H_a$ such that:
\begin{equation}
H_{p}(\m{x}, a) = \mathbb{E} [h(\bm{\theta}) | \m{x}, a] - \tilde{\psi} \qquad
H_a(\m{x}, a) = h(\SF(\m{x}, a; \scm)) -  \psi \,.
\end{equation}
In practice, $H_{p}$ is still non-trivial to solve.
Monte Carlo (MC) sampling is a common strategy to apply here such that, by sampling a fixed sized batch $\m{B} = \{\bm{\theta}: \bm{\theta} \sim \mathrm{Pr}(\nb_s(\m{x}, a)) \}$,
\begin{equation}
\textstyle H_{p}(\m{x}, a) = \mathbb{E} [h(\bm{\theta}) | \m{x}, a] - {\tilde{\psi}}
\approx (1 / |\m{B}|) \sum_{\bm{\theta} \in \m{B}}  {h(\bm{\theta})} - {\tilde{\psi}} \,.
\end{equation}
This implies that we substitute an unbiased estimator for the population mean.

\section{Actionability Constraints}
\label{sec:actConsts}

\subsection{Non-Causal}
Here we define the actionability constraints used in the various domains.
It may be assumed that the direction features are allowed to change corresponds with \textit{positive gain}.
We use various sized ``action sets" to fully test all algorithms in various setups.
The German Credit data used 15 actionable features to be closely in line with Mothilal et al.~\cite{mothilal2020dice} whom allowed all features to be mutable.
However, we also used 7 on Lending Club, and 4 on Adult Census/Breast Cancer to test the algorithms in situations with smaller action spaces also for completeness.

We ordered categorical features in a sensible fashion to ``direct" semifactual ``even if" thinking, and when we say a categorical feature could decrease/increase, we are referring to this pre-defined order.
If you are interested in the exact ordering, please refer to our code which contains all the lists, but here we summarise.
In reality however, a user must specify their exact actionability constraints, what we have specified here is designed to be representative what is possible for the ``average" individual.

\subsubsection{German Credit Dataset}
The continuous features used were `duration', `amount', `age', the categorical ones were `status', 'credit\_history', `purpose', `savings',
       `employment\_duration', `installment\_rate', `personal\_status\_sex',
       `other\_debtors', 'present\_residence', `property',
       `other\_installment\_plans', `housing', `number\_credits', `job',
       `people\_liable', `telephone', `foreign\_worker'.
As actionable features for semifactual recourse, we considered the following:
\begin{itemize}
  \item \textit{duration}: We allowed people to increase the duration of their loan.
  \item \textit{amount}: We allowed people to increase the amount of their loan.
  \item \textit{status}: We allowed people to move towards having lower status.
  \item \textit{credit\_history}: We allowed people to move towards e.g. having a late payment if their credit history was otherwise good.
  \item \textit{savings}: This feature was allowed to decrease.
  \item \textit{employment\_duration}: This feature was allowed to decrease in case people wanted to e.g. start a new job.
  \item \textit{installment\_rate}: This feature was allowed to move towards lower payments.
  \item \textit{other\_debtors}: this feature was allowed to add another co-applicant.
  \item \textit{present\_residence}: This feature was allowed to move towards e.g. renting in case the user desired to do so whilst searching for a new house with their loan.
  \item \textit{property}: this feature was allowed to move towards having no property in case the user desired to sell their house/car etc to help pay for e.g. a downpayment.
  \item \textit{other\_installment\_plans}: This feature was allowed to add other installment plans.
  \item \textit{housing}: this feature was allowed to move towards renting away from e.g. owning.
  \item \textit{number\_credits}: This feature was allowed to increase if the user desired to acquire more credit cards.
  \item \textit{job}: this feature was allowed to decrease in case the individual desired to get a different, less demanding job within their institution, or indeed quite their job to e.g. start a business.
  \item \textit{people\_liable}: This feature was allowed to move towards more people being liable.
\end{itemize}

\subsubsection{Lending Club}
The continuous features used were `loan\_amnt', `pub\_rec\_bankruptcies', `annual\_inc', `dti', the categorical ones were `emp\_length', `term', `grade', `home\_ownership', `purpose'.
As actionable features for semifactual recourse, we considered the following:
\begin{itemize}
  \item \textit{home\_ownership}: This feature was allowed to decrease towards e.g. renting.
  \item \textit{annual\_inc}: this feature was allowed to decrease if the person desired to e.g. work less hours.
  \item \textit{emp\_length}: This feature was allowed to decrease in case the individual desired to change careers.
  \item \textit{dti}: dept to income ratio, this feature was allowed to increase.
  \item \textit{pub\_rec\_bankruptcies}: This feature was allowed to increase in case the user decided they wanted to declare bankruptcy to e.g. try and keep some assets.
  \item \textit{loan\_amnt}: this feature was allowed to increase.
  \item \textit{term}: This feature was allowed to decrease.

\end{itemize}


\subsubsection{Breast Cancer}
The continuous features used were none, the categorical ones were `agegrp', `density', `race', `Hispanic', `bmi', `agefirst', `nrelbc',
       `brstproc', `lastmamm', `surgmeno', `hrt'.
As actionable features for semifactual recourse, we considered the following:
\begin{itemize}
  \item \textit{bmi}: This feature was allowed to move towards less healthy BMI levels in case the patient e.g. has hypothyroidism.
  \item \textit{brstproc}: this feature was allowed to move towards having had a previous breast proceedure in case the patient would like to do so or was advised.
  \item \textit{hrt}: This feature was allowed to move towards starting HRT, in case a person may wish to alleviate synthoms of the menopause.
  \item \textit{agegrp}: this feature was allowed to get older in case the individual would like to take no action confident that it would not lead to cancer in the next few years/decades.
\end{itemize}


\subsection{Causal}
\label{sec:actConstsCausal}
In the causal setting, we allowed a user's age to increase a maximum of 5 years to mimic the motivating examples in the paper about a user having a bank loan accepted.
In such a situation, the user may want to e.g. work less hours over the next 5 years whilst they repay the loan, and still have it accepted.

Next, we detail the direction features are allowed to change, and what direction corresponds to \textit{positive gain}.

\subsubsection{Adult Income Census}
We use the features “sex”, “age”, “native-country”, “marital-status”, “education-num”, “hours-per-week”, which are the variables in the causal graph of Nabi \& Shpitser~\cite{nabi2018fair}.
We consider “age” and ``hours-per-week'' as actionable.
We allow ``age'' to increase a maximum of five years, and ``hours-per-week'' to decrease.

For positive gain, we considered: Age, marital status, and eduation-num \textit{increasing} corresponding to positive gain, and hours-per-week \textit{decreasing} corresponding to positive gain.
A persons sex was seen as neutral gain.

\subsubsection{COMPAS}
We use the features “age”, “race”, “sex” and “priors count”, which are the variables in the causal graph of Nabi \& Shpitser~\cite{nabi2018fair}. We consider ``age'' and “priors count” as actionable. As actionability constraints, we assume that both features are non-negative and can only be increase.
Age specifically is only allowed to increase by 5 years for each individual.

For positive gain, we considered: Age and priors count increasing corresponding to positive gain.
A persons sex and race was seen as neutral gain.

\section{Hyperparameter Choices}
\label{sec:hyperApp}
In this section, we discuss the hyperparameter specifications for the causal and non-causal cases respectively.

\subsection{Non-Causal}
Here we note the values for the hyperparameters used in our demonstrations.
All were obtained though pilot grid-searches across each dataset.
The hyperparameter choices are summarised in~\cref{tb:hyper}
\begin{table}[h]
\centering
\caption{Hyperparameter Specifications}
\label{tb:hyper}
\begin{tabular}{c c c c c }
\toprule
Data & $\lambda_p$ & $\lambda_{s}$ & $\gamma_d$ &  $\gamma_p$ \\
\midrule
German credit & 30 & 10 & 1 & $1e^{-1}$ \\
Lending Club  & 30 & 10 & 1 & $1e^{-1}$ \\
Breast Cancer & 10 & 10 & 10 & $1e^{-1}$ \\
\bottomrule
\end{tabular}
\end{table}





For S-GEN itself, we used the same hyperparameters everywhere outside of the above table.
The number of generations spent searching for a solution was 20.
The population size was fixed at $\{12,24,48,72,96,120\}$, for diversity sizes of $\{1,2,4,6,8,10\}$, respectively.
The mutation rate was 0.05.
The number of ``elite" solutions passed on for each generation was 4.
The probability of a crossover happening was 0.5.
The number of Monte Carlo trials for each instance was 100.
The continuous features were perturbed (in mutation or population initialization) by the output from sampling a standard normal distribution with standard deviation equal to the max actionable feature value, minus the min actionable feature value, multiplied by 0.05.

\subsection{Causal}
In our causal tests we chose $\lambda$ as 1.0, and this was gradually decreased by a momentum of $\eta$=0.9 each iteration to put more emphases on the maximization of gain.

\section{Algorithm Pseudocode}
\label{app:algorithm}


\begin{algorithm}[h]
\begin{algorithmic}[1]
\REQUIRE $\m{x}$ the user feature
\REQUIRE $h(\cdot)$ the predictive model
\REQUIRE $m$ the expected number of suggestions
\REQUIRE $n$ the number of candidates, $n > m$
\ENSURE $\m{R}_{SF}$ the set of semifactual(s)
\STATE Sample $n$ candidates $\m{D} \gets \{\bm{\theta}_i \sim \mathcal{X}\}_{i=1}^{n}$
\WHILE{the stopping criterion is not satisfied}
\STATE Obtain the fitness scores $\m{f}$ with respect to $\m{D}$
\STATE Save the fittest $\bm{\theta}^* \in \m{D}$ according to $\m{f}$
\STATE Let $\m{D}$ evolve by \emph{natural selection} according to $\m{f}$, \emph{crossover}, \emph{mutation}, and \emph{elitism} with $\m{x}^*$
\ENDWHILE
\STATE Collect the best $m$ unique candidates from $\{\bm{\theta} \in \m{D}: h(\bm{\theta}) = h(\m{x}) = 1 \}$ to $\m{R}_{SF}$, according to the corresponding fitness scores in $\m{f}$
\IF{$|\m{R}_{SF}| < m$}
\STATE Complement $\m{R}_{SF}$ to $m$ elements with $\bm{\theta}$ randomly drawn from $\m{R}_{SF}$
\ENDIF
\end{algorithmic}
\caption{S-GEN: Genetic Algorithm to Generate semifactual Recourse with Robustness and Diversity in a Non-Causal Model Agnostic Setting}
\label{alg:noncausal}
\end{algorithm}

\begin{algorithm}[t]
\begin{algorithmic}[1]
\REQUIRE $\m{x}$ the user feature vector
\REQUIRE $h(\cdot)$ the predictive model
\REQUIRE $\scm$ the \textbf{differentiable} SCM
\REQUIRE $\epsilon$ the epsilon robustness
\REQUIRE $\eta$ the momentum parameter
\REQUIRE $\tau$ the learning rate
\REQUIRE $\mathrm{Proj}(\cdot)$ a projection function that ensures the action is actionable
\ENSURE $\m{R}_{SF}$ the set of semifactual(s)
\STATE $\m{R}_{SF} \gets \emptyset$
\STATE $i \gets 0$
\FOR{$a \in \mathcal{A}$}
\STATE Move to next loop if the SF generated with the initial $a$ does not satisfy the constraints.
\STATE $a_i \gets a$
\WHILE{not converged}
\STATE Sample a batch of neighbors from $\mathbb{B}_s(\m{x}, a_i)$, denoted by $\mathbf{B}_i$
\IF{$h(\SF(\m{x}, a_i; \scm)) = 0$ or $h(\bm{\theta}) = 0, \exists \bm{\theta} \in \mathbf{B}_i$}
\STATE \textbf{break}
\ENDIF
\STATE $\mathcal{J}_i \gets - \lambda_i \mathcal{L} \left(h(\SF(\m{x}, a_i; \scm)), h(\m{x}) \right) - \sum_{\bm{\theta}_i \in \mathcal{B}_i} \frac{\lambda_i}{|\mathbf{B}_i|}  \mathcal{L} \left(  h(\bm{\theta}_i), h(\m{x}) \right) + \hat{P}(\m{x}, a_i)\hat{G}(\m{x}, a_i) $
\STATE $a_i \gets \mathrm{Proj} \left( a_i + \tau \nabla_{a_i} \mathcal{J}_i \right)$
\STATE $\lambda_i \gets \eta \lambda_i$
\ENDWHILE
\STATE $\m{R}_{SF} \gets \m{R}_{SF} \cup \{ \SF(\m{x}, a_i; \scm) \}$
\STATE $i \gets i + 1$
\IF{$i \ge m$}
\STATE \textbf{break}
\ENDIF
\ENDFOR
\end{algorithmic}
\caption{S-GEN: Algorithm to Generate Robust \& Diverse Causal semifactual Explanations for Differentiable Classifiers}
\label{alg:sg}
\end{algorithm}

\section{Code}
\label{sec:code}
For our full code used please see:

\url{https://github.com/EoinKenny/Semifactual_Recourse_Generation}

The ability to reproduce the results is given.

\newpage

\section{Individual Dataset Results}
\label{sec:full_figs}
The results are presented in~\Cref{fig:diverse}.

\begin{figure}[!ht]
  \centering
  \includegraphics[width=0.95\textwidth]{./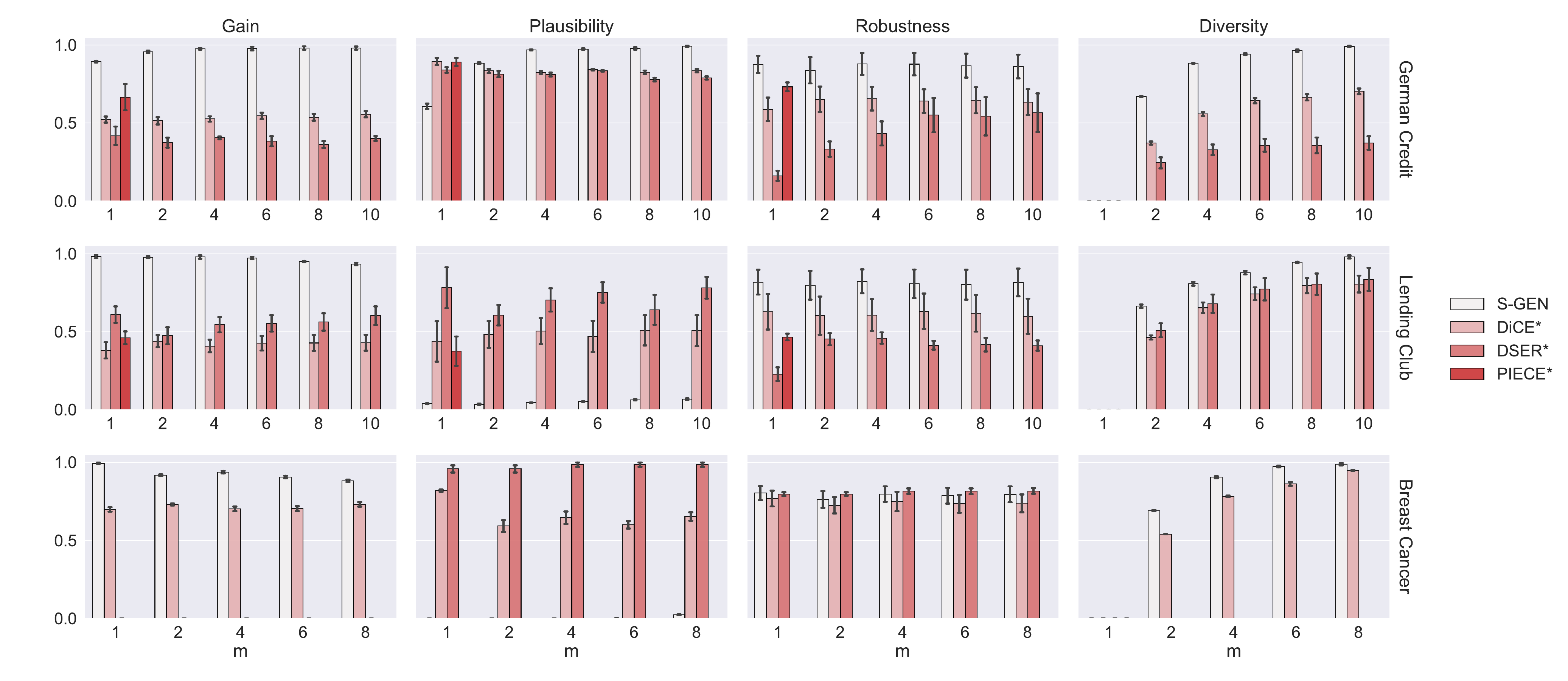}
  \caption{
  Results:
  The ability of S-GEN to create semifactuals is compared to DiCE* and PIECE*.
  Overall, S-GEN does the best, achieving significantly better results to both baselines on 11/16 tests.
  Moreover, S-GEN was only significantly worse than either baseline on a single test (i.e., plausibility on German Credit), with the remaining four tests being competitive between methods.
  Standard error bars are shown.
  }
  \label{fig:diverse}
\end{figure}




\section{Baselines}
\label{sec:baselinesAppendix}
\subsection{Non-Casual}

\paragraph{DiCE}
Our modification to DiCE, starts by generating a counterfactual(s) for a query.
Next, we use the algorithm again, but on the generated counterfactuals(s), to make them generate a second counterfactual, which goes back over the decision boundary.
In effect, this generates a semifactual(s) for a query.

\paragraph{PIECE}
Second, we use the PIECE framework by Kenny and Keane~\cite{kenny2021generating}, but apply it to tabular data.
Following the authors, we divide the training data into two sets, the first corresponding to those predicted as the original class $c$, and the second to those predicted as the counterfactual class $c'$, these are again split into the respective features.
Hence, if there are 2 classes, with 4 features, there are $2 \times 4=8$ sets of data.
These sets were then modeled using the best fit found for a Beta distribution on continuous features, and a simple Categorical distribution for categorical features.
To generate a semifactual predicted as $c$, we take the probability of each feature value in the query using the models of the counterfactual class $c'$, and modify each to be its expected statistical value in $c'$ one-by-one (from the lowest probability to the highest), until the next would take it over the decision boundary.
In the case of continuous features, as done by Kenny and Keane~\cite{kenny2021generating}, we take the probability as being the minimum of the two integrals either side of the feature value in the distribution.
In the case the expected feature values lie outside the actionability range, we clip them to the closest value allowed.


\paragraph{DSER}
For Diverse Explanation of Reject~\cite{artelt2022even} (DSER) we had to modify the the technique in two main ways.
Most notably, the techniques doesn't deal with categorical features, so to overcome this, we optimised treating all one hot encoded features as real-valued, and then projected each categorical feature onto its nearest value.
Next, the method addresses diversity by iterating all different sets of possible features, in our domains this is computationally intractable.
Hence, we optimise one semifactual at a time, each time pushing each solution as far as possible from those already found.

\subsection{Causal}
\paragraph{Karimi et al.~(2021)}
The method by Karimi et al.~\cite{karimi2021algorithmic} is a recourse method designed to minimise cost whilst traversing the decision boundary.
To modify the technique, we simply stop the optimization when the next step would take it over the decision boundary.

\paragraph{Dominguez et al.~(2022)}
The method by Dominguez et al.~\cite{dominguez2022adversarial} is identical to Karimi et al.~\cite{karimi2021algorithmic}, but they add in a robustness component.
Namely, they take an individual $x$, and solve an inner loss which means that an individual of distance $\epsilon=0.1$ (in our tests) close to $x$, with the same recourse given, will also achieve recourse.
We simply keep the same optimization process, but aim to solve a different objective.
The objective we solve is to move towards the decision boundary, but when the recourse option causes either $x$ or the individual close to it to cross the decision boundary, we terminate the optimization one step prior to this.

\section{Computational Costs}
\label{sec:compCost}
All tests were run on a MacBook Pro, Apple M1 Pro, 16 GB. Re-running tests will take less than 1 day.

\section{User Study}
\label{sec:userStudyPDF}
Here we show our entire user study for complete transparency.
We used the German Credit dataset, but converted the currency into U.S. dollars since it was given to U.S. citizens to complete.

\begin{figure*}[!h]
  \centering
  \includegraphics[width=0.95\textwidth]{./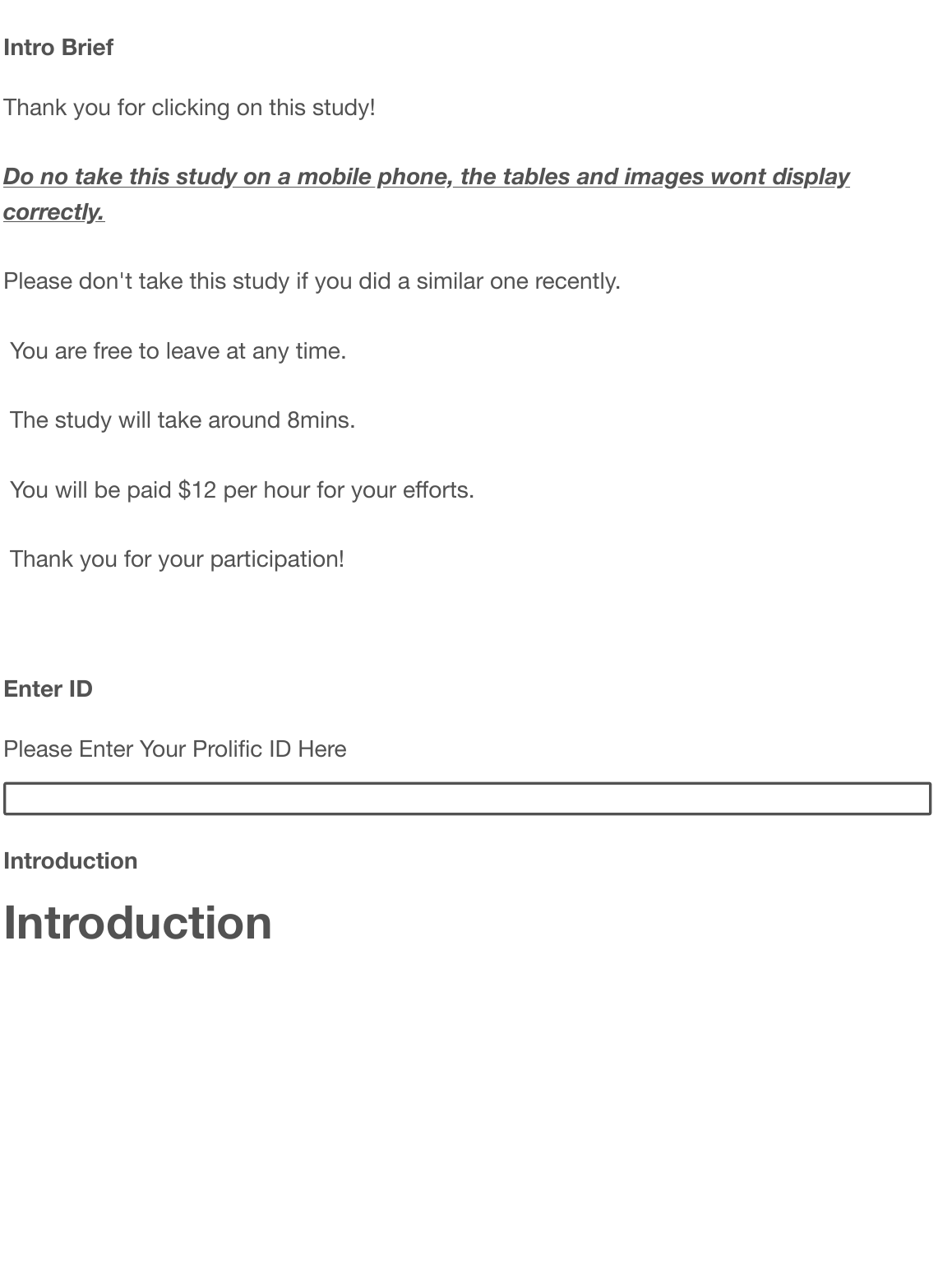}
\end{figure*}

\begin{figure*}[!h]
  \centering
  \includegraphics[width=0.95\textwidth]{./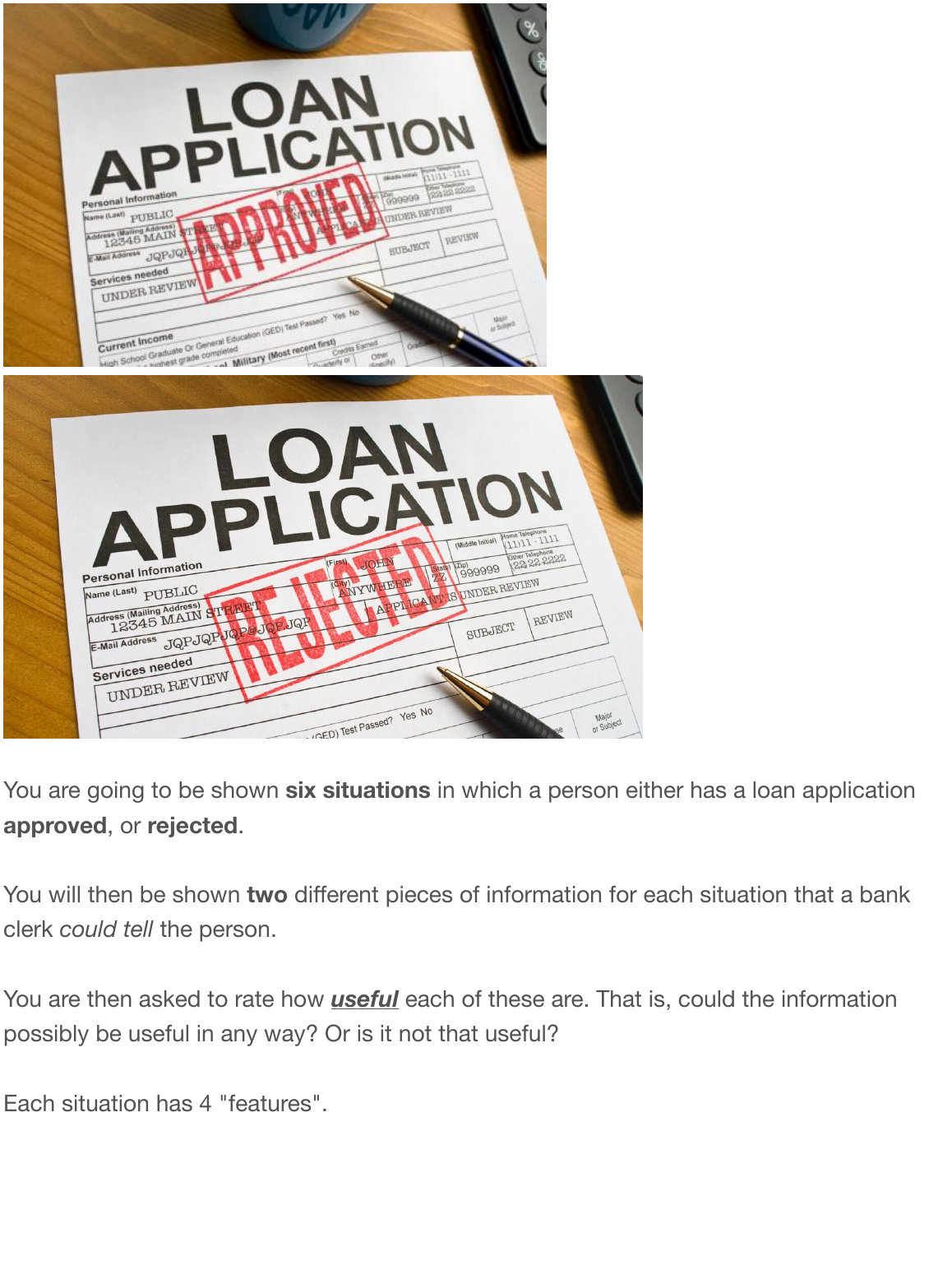}
\end{figure*}

\begin{figure*}[!h]
  \centering
  \includegraphics[width=0.95\textwidth]{./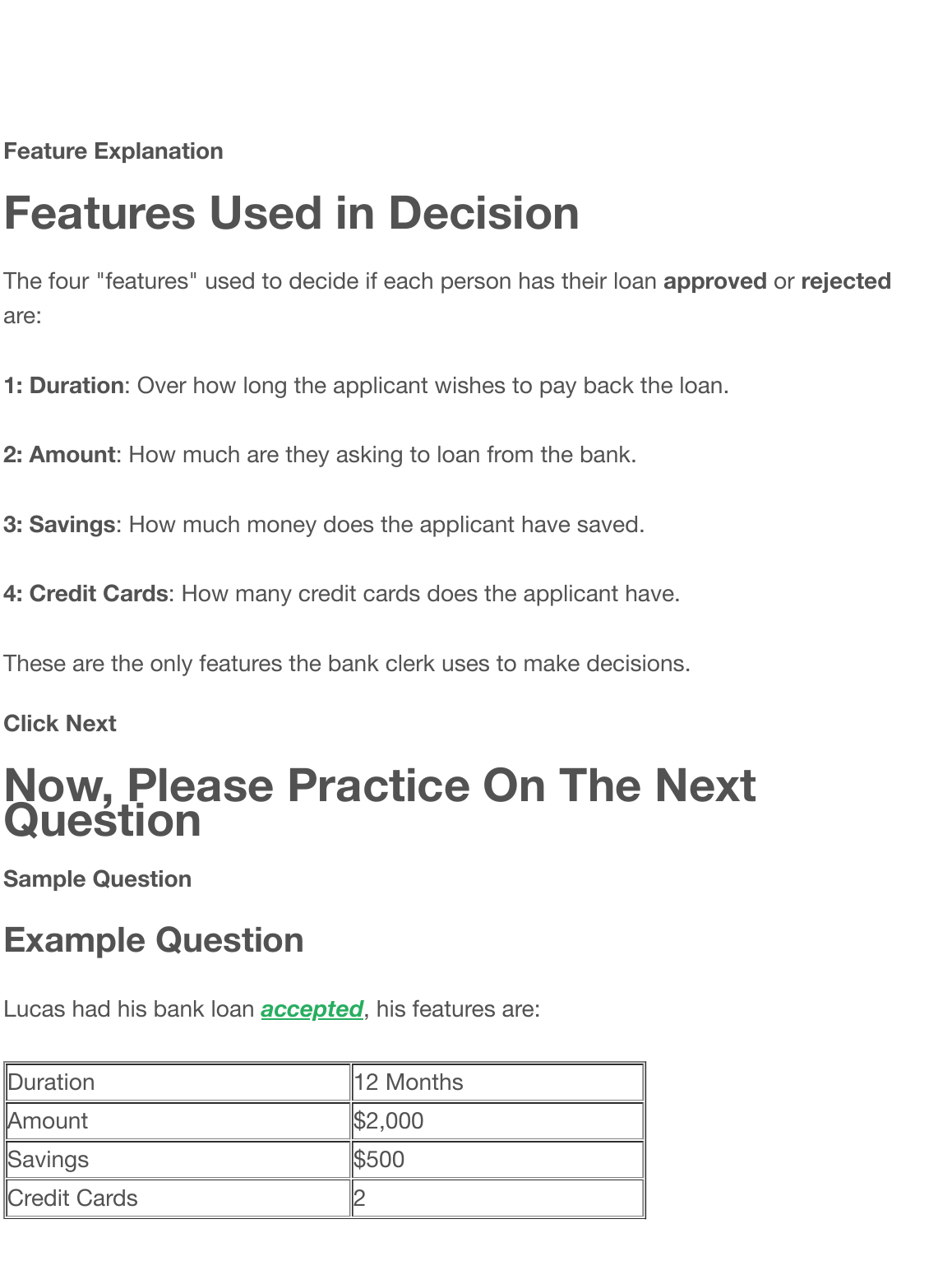}
\end{figure*}

\begin{figure*}[!h]
  \centering
  \includegraphics[width=0.95\textwidth]{./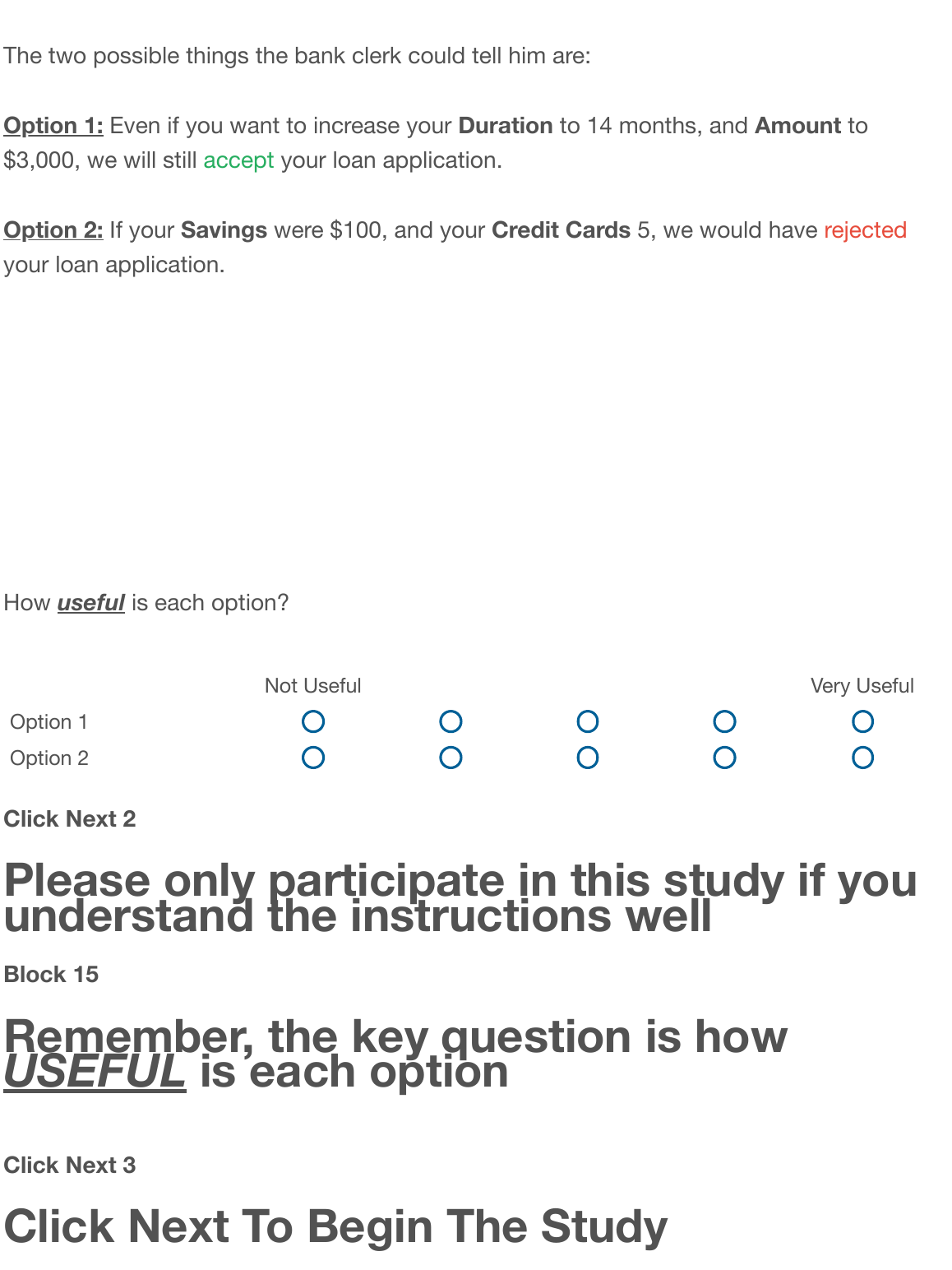}
\end{figure*}

\begin{figure*}[!h]
  \centering
  \includegraphics[width=0.95\textwidth]{./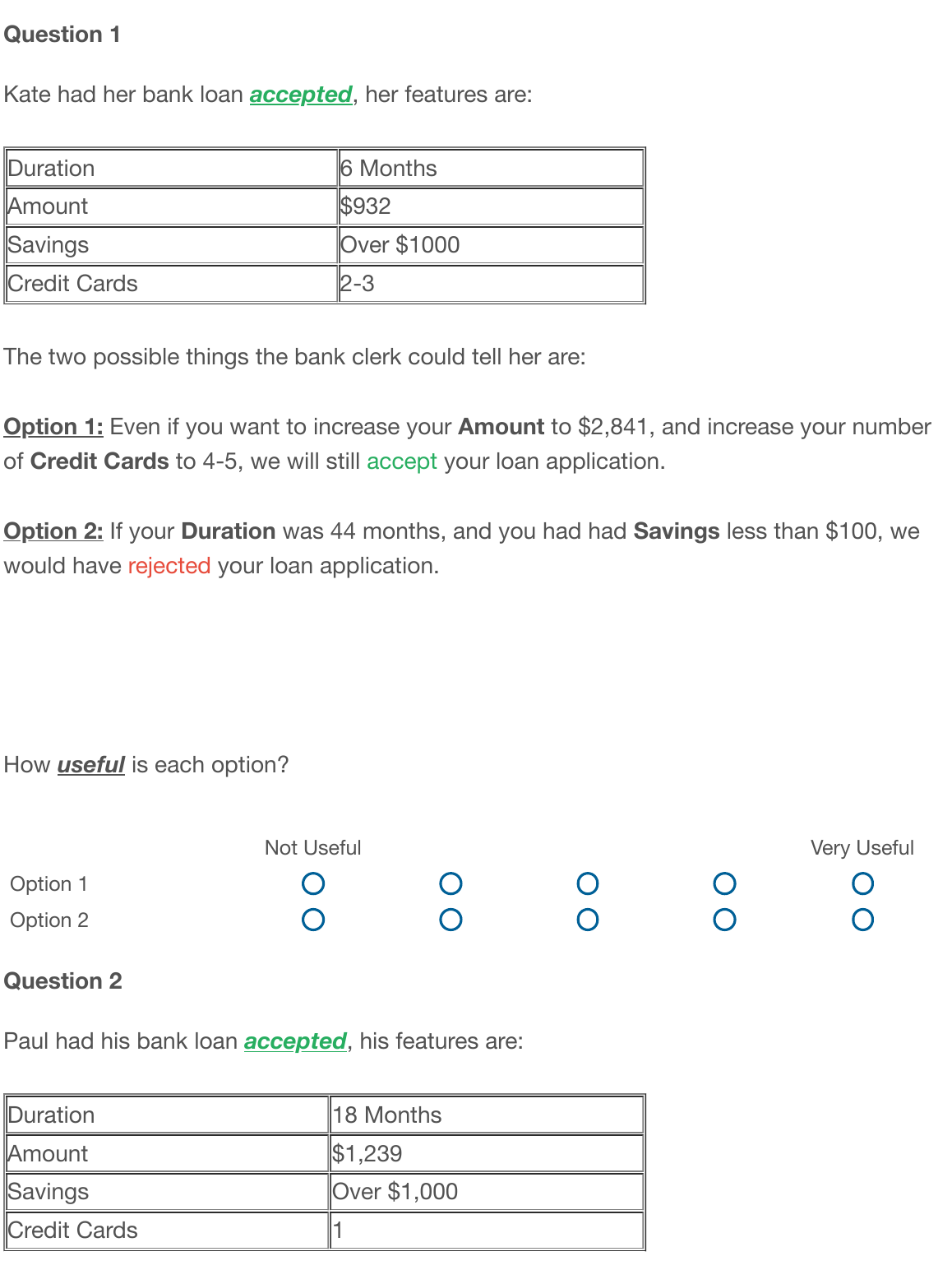}
\end{figure*}

\begin{figure*}[!h]
  \centering
  \includegraphics[width=0.95\textwidth]{./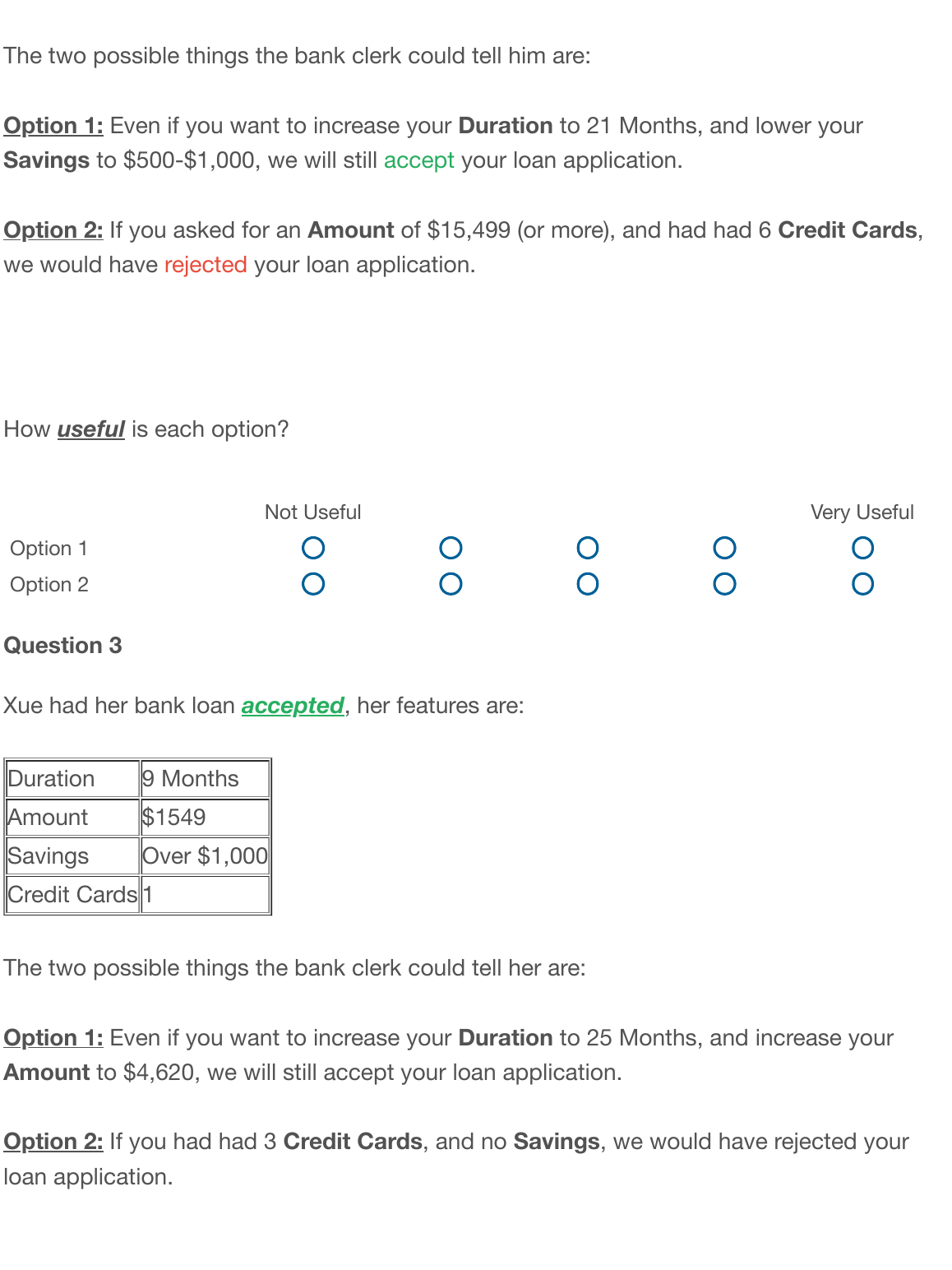}
\end{figure*}

\begin{figure*}[!h]
  \centering
  \includegraphics[width=0.95\textwidth]{./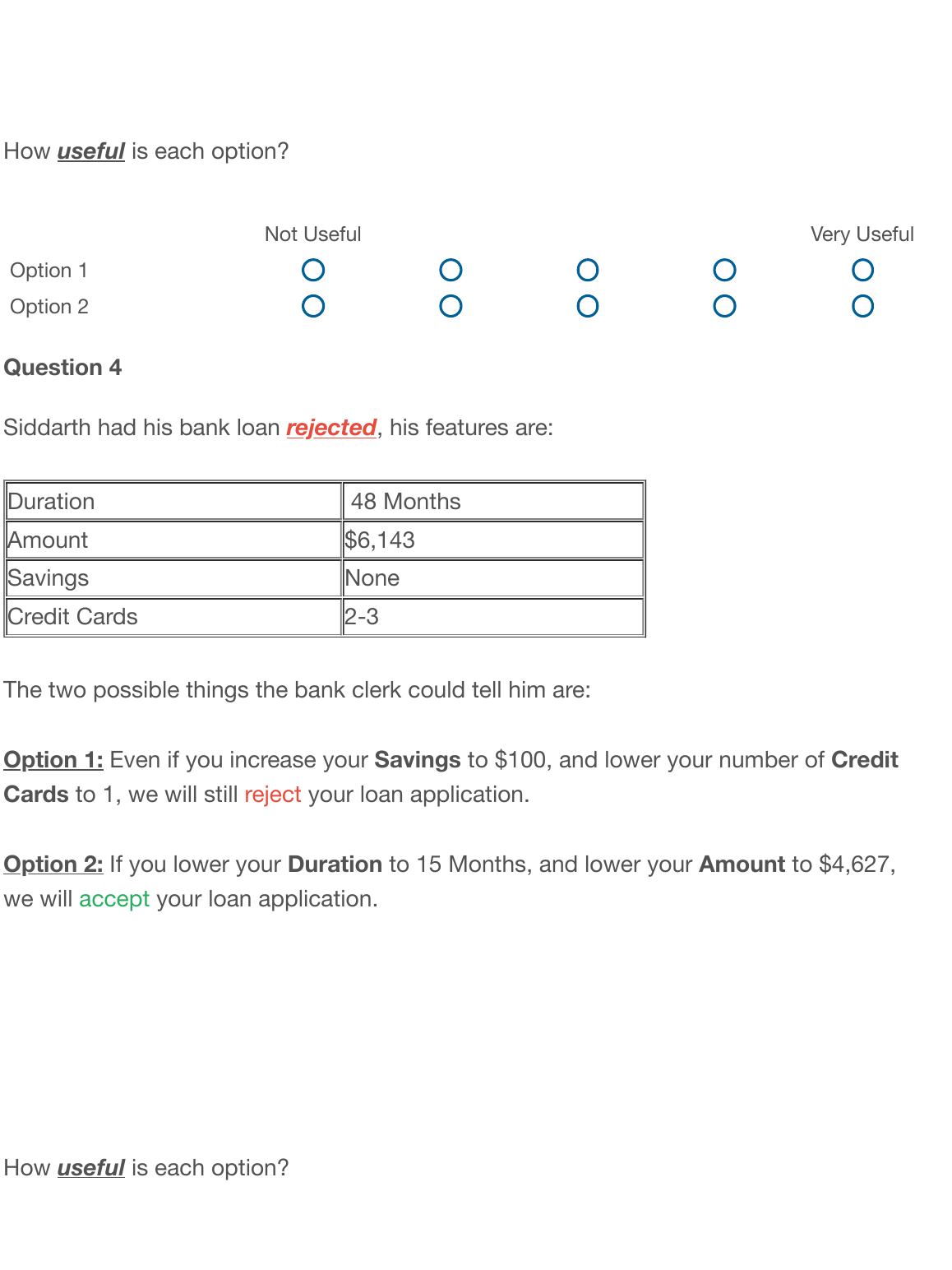}
\end{figure*}

\begin{figure*}[!h]
  \centering
  \includegraphics[width=0.95\textwidth]{./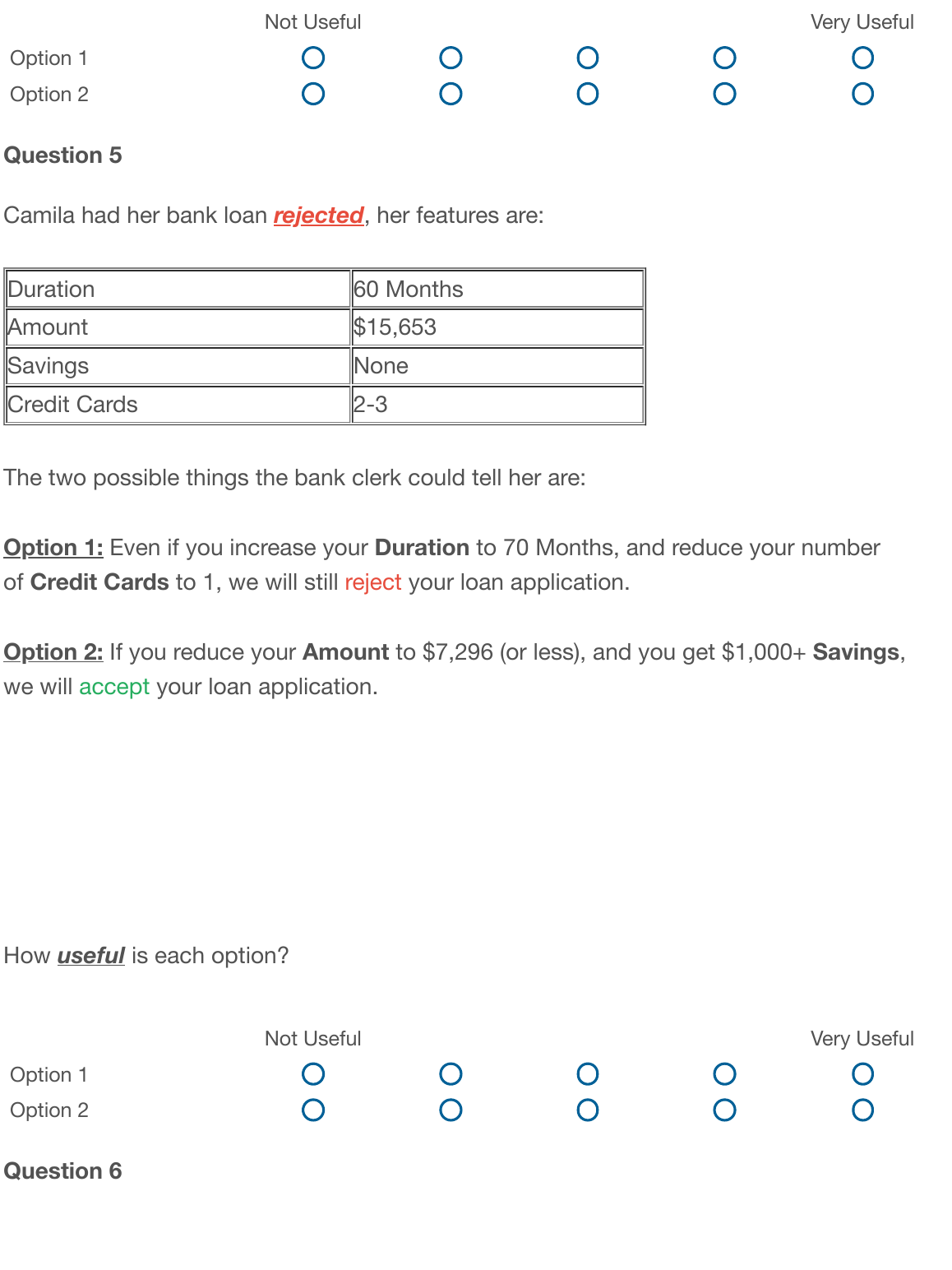}
\end{figure*}

\begin{figure*}[!h]
  \centering
  \includegraphics[width=0.95\textwidth]{./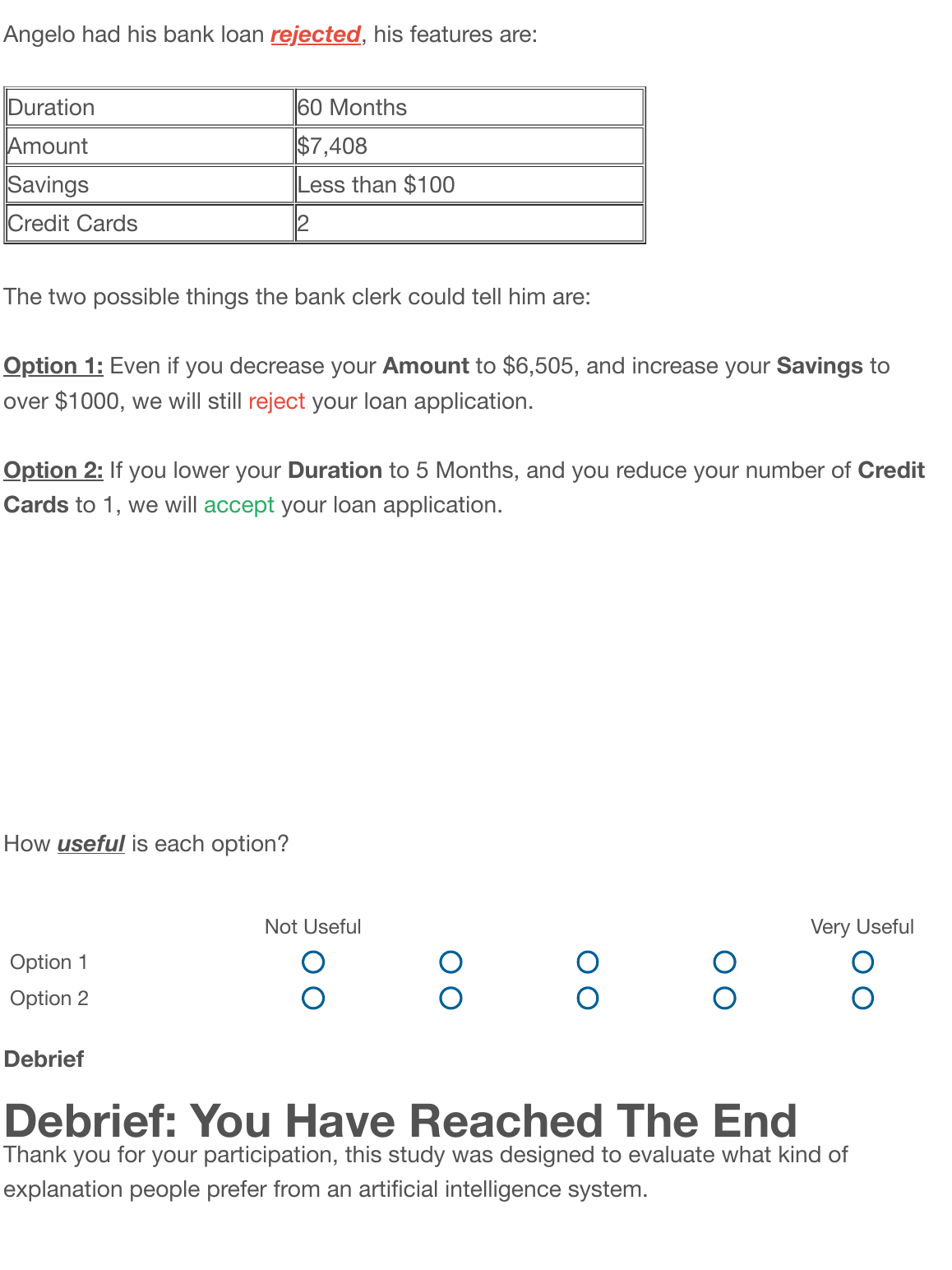}
\end{figure*}

\end{appendices}

\end{document}